\let\Ginclude@graphics\@org@Ginclude@graphics 
\newcommand{\oam}[1]{\todo[inline,color=orange!40]{{\textbf{OM:}~}#1}}
\newcommand{\ap}[1]{\todo[inline,color=red!40]{{\textbf{AP:}~}#1}}
\newcommand{\KL}{\operatorname{KL}\!}
\title[\textsc{KL-UCB-Transfer}]{Asymptotically Optimal Problem-Dependent Bandit Policies for Transfer Learning}
\author{\Name{Adrien PREVOST}  \Email{adrien.prevost@inria.fr}\\ 
\Name{Timothee MATHIEU} \Email{timothee.mathieu@inria.fr} \\
\Name{Odalric-Ambrym MAILLARD} \Email{odalric.maillard@inria.fr} \\
\addr Equipe Scool, Univ. Lille, Inria, CNRS, Centrale Lille, UMR 9189- CRIStAL, F-59000 Lille, France}
\begin{document}
\nolinenumbers

\maketitle

\begin{abstract}
We study the non‐contextual multi‐armed bandit problem in a transfer learning setting: before any pulls, the learner is given $N'_k$ i.i.d.\ samples from each source distribution $\nu'_k$, and the true target distributions $\nu_k$ lie within a known distance bound $d_k(\nu_k,\nu'_k)\le L_k$.  In this framework, we first derive a problem‐dependent asymptotic lower bound on cumulative regret that extends the classical Lai–Robbins result to incorporate the transfer parameters $(d_k,L_k,N'_k)$.  We then propose \textsc{KL-UCB-Transfer}, a simple index policy that matches this new bound in the Gaussian case.  Finally, we validate our approach via simulations, showing that \textsc{KL-UCB-Transfer} significantly outperforms the no‐prior baseline when source and target distributions are sufficiently close.
\end{abstract}

\begin{keywords}
Multi-armed bandit; Transfer Learning; Optimality problem-dependant; KL-UCB
\end{keywords}

\section{Introduction}

In many sequential‐decision tasks, one often has access to \emph{historical data} collected under related but not identical conditions.  Leveraging such data to speed up learning in a new environment is the goal of \emph{transfer learning}, a paradigm that has been successfully applied in computer vision, reinforcement learning, and natural language processing.  For example, pre‐trained image classifiers built on large, generic datasets can be adapted to medical imaging; in robotics, policies learned in simulation help bootstrap real‐world control; and in agriculture, past yield records from neighboring fields guide crop‐management decisions in a new plot \citep{taylor2009transfer}.

In this work we study transfer in the simplest possible online setting: a \emph{non‐contextual multi‐armed bandit} with $K$ arms and i.i.d.\ rewards.  Before interacting with the target bandit, the learner receives $N'_k$ i.i.d.\ samples from each \emph{source} arm $k$, whose true reward distribution we denote by $\nu'_k$.  The goal is to minimize cumulative regret on the \emph{target} bandit, whose arm $k$ has distribution $\nu_k$.  We assume the learner knows a family of (pseudo-)distance $d_k$ and upper bounds $L_k > 0$ such that each $\nu_k$ lies within distance $L_k$ of $\nu'_k$:
\begin{equation*}
    \nu_k \in \bigl\{\gamma \in \mathcal{D} : d_k(\gamma,\nu'_k)\le L_k\bigr\},
\quad
k=1,\dots,K.
\end{equation*}

Intuitively, small $L_k$ makes source samples informative; large $N'_k$ reduces noise. This departs from prior transfer-bandit work assuming contextual, causal, or multi-task structure \cite{ rahul2024transfer, wang2022thompson}. Instead, we study a minimal non-contextual setting with i.i.d.\ samples per arm and transfer via known distance bounds $(d_k,L_k)$. This captures scenarios (e.g., warm-start from historical data or sim-to-real) while enabling sharp analysis. This \emph{non‐contextual transfer bandit} setting raises two key questions:
\begin{enumerate}
  \item \textbf{Characterize the transfer-aware minimal regret.}  
    Extend the classical Lai–Robbins lower bound on the regret \cite{LAI19854} so it explicitly depends on the transfer parameters $(d_k,L_k,N'_k)$, yielding the minimal asymptotic cost any algorithm must pay.
  \item \textbf{Design a matching policy.}  
    Propose a simple index strategy that incorporates the $N'_k$ prior samples and achieves the transfer-aware lower bound.
\end{enumerate}

\noindent\textbf{Our contributions}  

\begin{itemize}
  \item We derive a \emph{problem‐dependent} lower bound on cumulative regret which holds for any algorithm with prior data $\{N'_k,\hat\nu'_k,d_k,L_k\}$ (see Section \ref{sec:LowerBounds}, Theorem \ref{thm:lowerb}).  
  \item Focusing on Gaussian rewards with known variance, we introduce in Section \ref{sec:KLUCB} \textsc{KL-UCB-Transfer}, a minor modification of the KL‐UCB index that adds a KL‐penalty term for the source data.  We prove that the regret of \textsc{KL-UCB-Transfer} \emph{matches} our lower bound (see Section \ref{sec:RegretKL}, Theorem \ref{thm:KLREGRET}). In the regret analysis, one must carefully handle potentially large or $T$-dependent $N'_k$: the index should leverage prior samples to avoid redundant exploration while preserving sufficient exploration when necessary.
  \item We validate via simulations that \textsc{KL-UCB-Transfer} can achieve substantial regret reductions when the priors on suboptimal arms are accurate, and we highlight the delicate short‐term trade‐offs when priors are placed on the optimal arm in Section \ref{sec:sim}.
\end{itemize}

By keeping our setting minimal—no contexts, fixed variances—we obtain a clean, intuitive theory and a useful algorithm for studying transfer learning.

\section{Related Work}

\paragraph{Classical non‐contextual bandits.}  
The stochastic multi‐armed bandit was first studied by \cite{dc35850b-2ca1-314f-9e0d-470713436b17} and rigorously analyzed by \cite{robbins1952some}. \cite{LAI19854} proved the fundamental problem‐dependent asymptotic lower bound on regret, and a range of algorithms—such as KL‐UCB~\cite{pmlr-v19-maillard11a,Cappe2013KL}, IMED~\cite{honda2015imed}, DMED~\cite{honda2010asymptotically}, and Thompson Sampling~\cite{kaufmann2012thompson}—are known to match this bound.

\paragraph{Contextual transfer and causal approaches.}  
Transfer in contextual bandits has been extensively studied: \cite{deng2025transferlearninglatentcontextual} leverages causal transportability to handle covariate shift in latent contextual settings; \cite{10.1214/23-AOS2341} derives minimax optimal rates under covariate shift; \cite{NEURIPS2023_8a8ce53b} proposes a transport-aware Thompson Sampling policy using graphical causal models; \cite{ijcai2017p186} analyzes a causal MAB with latent confounders and provides a problem-dependent lower bound under specific structural assumptions. However, these approaches rely on contextual or causal-structure assumptions and do not address the non‐contextual offline–online transfer setting where one first observes i.i.d.\ source samples per arm, nor do they derive a problem-dependent lower bound, and consequently lack any algorithm matching such a bound in that regime.

\paragraph{Non‐contextual transfer bandits.}  
Several recent works consider transfer in the classical (non‐contextual) bandit setting, but under different task structures \cite{NIPS2013_062ddb6c,10.1007/978-3-030-86486-6_1}. \cite{wang2022thompson} study a \emph{multi‐task} regime where multiple bandits run in parallel and Thompson Sampling borrows posterior mass across tasks for robustness, yielding minimax‐style guarantees.  \cite{rahul2024transfer,rahul2024exploiting} analyze a \emph{sequential‐task} framework: they assume known “adjacent similarity” bounds to transfer raw reward samples via UCB, and in their follow‐up work they learn the similarity radius online, each time achieving improved instance‐dependent regret over no‐transfer baselines. \oam{The was this is presented, this looks very close to the current setting: why is this work not enough? explain.} 
\ap{Le paragraphe suivant parle justement des limites des papiers présentés}
\oam{Implicitement peut-être, pas explcitement.}
\\
\\
By contrast, we consider a single target bandit and first receive $N'_k$ i.i.d. samples from each source arm.  We give the first \emph{problem‐dependent asymptotic lower bound} in this \emph{offline, non‐contextual} transfer setting (see Theorem \ref{thm:lowerb}) and design a KL‐UCB–style index that \emph{matches} it exactly in the Gaussian case (see Theorem \ref{thm:KLREGRET}). 

While our setting may appear overly simplified, deriving a problem-dependent lower bound that explicitly accounts for arbitrary prior sample sizes $N'_k$ is non-trivial and not covered by existing approaches, which often merge prior and online data without regard to lower-bound tightness.
Also, despite the generic formulation, this setting has received little direct attention in the literature.

\section{Problem formulation}

In this section, we first recall the classical stochastic multi‐armed bandit framework, then introduce our offline non‐contextual transfer setting with prior samples from a related source.

\paragraph{Classical setting}  
We consider a known family of distributions $\mathcal{D}$ and unknown arm distributions $\nu=(\nu_1,\dots,\nu_K)\in\mathcal{D}^K$ with means $\mu_k$. At each round $t$, a learner selects arm $a_t$ and observes reward $X_t\sim\nu_{a_t}$. Assume w.l.o.g.\ that arm 1 has the highest mean. Let $\mathbf{1}\{\cdot\}$ denote the indicator. the cumulative regret is
\begin{equation*}
  R_T = \sum_{k=1}^K \mathbb{E}[N_k(T)]\,(\mu_1-\mu_k),
  \quad \text{where} \hspace{0.5cm} N_k(T)=\sum_{t=1}^T\mathbf{1}\{a_t=k\}.
\end{equation*}
A strategy is \emph{consistent} if for each suboptimal arm $k$, $\mathbb{E}[N_k(T)]=o(T^\alpha)$ for all $\alpha>0$. For any consistent algorithm and suboptimal arm $k$, the following asymptotic lower bound holds \cite{lattimore2020bandit}:
\begin{equation}\label{eq:borne-classique}
  \liminf_{T\to\infty}
  \inf_{\substack{\tilde\nu_k\in\mathcal{D}\\\mathbb{E}[\tilde\nu_k]>\mu_1}}
  \frac{\mathbb{E}[N_k(T)]\,\KL(\nu_k,\tilde\nu_k)}{\ln T}
  \;\ge\;1,
\end{equation}
results the following lower bound on the regret
\begin{equation*}
    R_T \geq \ln T \left( \sum_{k=2}^K(\mu_1 - \mu_k) \sup_{\substack{\tilde\nu_k\in\mathcal{D}\\\mathbb{E}[\tilde\nu_k]>\mu_1}} \frac{1}{\KL(\nu_k,\tilde\nu_k)} \right) + o(\ln T)
\end{equation*}

\paragraph{Transfer learning setting}  
Before interacting with the target $\nu$, we observe $N'_k$ i.i.d.\ samples from related source distributions $\nu'=(\nu'_1,\dots,\nu'_K)\in\mathcal{D}'^K$ satisfying $d_k(\nu_k,\nu'_k)\le L_k$. The regret on $\nu$ remains $R_T$ as defined above. Our goal is to derive a transfer-aware lower bound analogous to (Eq\eqref{eq:borne-classique}), explicitly involving $(d_k,L_k,N'_k)$, and to design a KL-UCB–type algorithm that matches this bound.

\section{Lower Bounds}\label{sec:LowerBounds}

In this section, we derive fundamental limits on cumulative regret in our offline transfer setting. Specifically, we extend the classical Lai–Robbins bound (Eq\eqref{eq:borne-classique}) to incorporate the transfer parameters $(d_k, L_k, N'_k)$, showing that no consistent algorithm can beat this rate (see Theorem~\ref{thm:lowerb}), and we discuss several important special cases.
\begin{remark}
In the rest of the paper, all $o(\cdot)$ and $O(\cdot)$ notation refer to the limit $T \to \infty$. Since offline sample sizes $N'_k$ may depend arbitrarily on $T$ (or even be $+\infty$), any $o(\cdot)$ or $O(\cdot)$ usage must explicitly account for possible dependence on $N'_k$.
\end{remark}

\subsection{General Lower Bound in the Transfer Setting}

\begin{definition}
An algorithm is said to be \emph{consistent} (w.r.t.\ $(\mathcal{D},\mathcal{D}',d,L,N')$) if for every pair $(\nu,\nu')\in\mathcal{D}^K\times\mathcal{D}'^K$ and every suboptimal arm $k$,
\begin{equation*}
  \forall\,\alpha>0,\quad
  \mathbb{E}[N_k(T)] = o(T^\alpha)
  \quad(T\to\infty).
\end{equation*}
\end{definition}

\begin{theorem}\label{thm:lowerb}
Let a consistent algorithm operate under $(\mathcal{D},\mathcal{D}',d,L,N')$. For any suboptimal arm $k$,
\begin{equation*}
  \liminf_{T\to\infty}
  \inf_{\substack{\tilde\nu_k\in\mathcal{D},\;\tilde\nu_k'\in\mathcal{D}'\\
                  \mathbb{E}[\tilde\nu_k]>\mu_1,\;
                  d_k(\tilde\nu_k,\tilde\nu_k') \le L_k}}
  \frac{\mathbb{E}[N_k(T)]\,\KL(\nu_k,\tilde\nu_k)
        + N_k'\,\KL(\nu_k',\tilde\nu_k')}
       {\ln T}
  \ge 1.
\end{equation*}
\end{theorem}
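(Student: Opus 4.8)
The plan is to run a change-of-measure (transportation) argument in the spirit of Lai--Robbins and its modern formulation by Garivier, M\'enard and Stoltz, but carried out on the \emph{augmented} probability space that also records the $N'_k$ offline samples. Fix a suboptimal arm $k$ and a feasible pair $(\tilde\nu_k,\tilde\nu'_k)$ with $\mathbb{E}[\tilde\nu_k]>\mu_1$ and $d_k(\tilde\nu_k,\tilde\nu'_k)\le L_k$. I would build a single alternative instance $(\tilde\nu,\tilde\nu')$ that agrees with $(\nu,\nu')$ on every arm $j\neq k$ and replaces only the $k$-th target/source pair; since the distance bound is an arm-by-arm constraint it is preserved, so the alternative is again admissible and a consistent algorithm must behave well on it.

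First I would write the log-likelihood ratio of the whole history---the $N'_k$ offline draws together with the online trajectory up to horizon $T$---between $\mathbb{P}_{\nu,\nu'}$ and $\mathbb{P}_{\tilde\nu,\tilde\nu'}$. Because only arm $k$ is perturbed, it collapses to an offline block of exactly $N'_k$ i.i.d.\ terms plus an online block indexed by the pulls of arm $k$. Taking expectation under $\mathbb{P}_{\nu,\nu'}$, the offline block contributes the deterministic quantity $N'_k\,\KL(\nu'_k,\tilde\nu'_k)$, while the online block, via Wald's identity, contributes $\mathbb{E}[N_k(T)]\,\KL(\nu_k,\tilde\nu_k)$. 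Combining with the data-processing inequality for relative entropy then yields, for every event $A$ in the observation $\sigma$-algebra,
\[
  \mathbb{E}[N_k(T)]\,\KL(\nu_k,\tilde\nu_k)+N'_k\,\KL(\nu'_k,\tilde\nu'_k)
  \;\ge\;\mathrm{kl}\!\bigl(\mathbb{P}_{\nu,\nu'}(A),\,\mathbb{P}_{\tilde\nu,\tilde\nu'}(A)\bigr),
\]
where $\mathrm{kl}(p,q)=p\ln\frac{p}{q}+(1-p)\ln\frac{1-p}{1-q}$ is the binary relative entropy. This is precisely the classical fundamental inequality augmented by the additive offline penalty $N'_k\,\KL(\nu'_k,\tilde\nu'_k)$.

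Next I would instantiate $A=\{N_k(T)\le T/2\}$. Under $\nu$ arm $k$ is suboptimal, so consistency together with Markov's inequality gives $\mathbb{P}_{\nu,\nu'}(A)\to1$; under $\tilde\nu$ the perturbed arm $k$ has strictly the largest mean and is therefore optimal, so consistency forces $\mathbb{E}_{\tilde\nu}[T-N_k(T)]=o(T^\alpha)$ and hence $\mathbb{P}_{\tilde\nu,\tilde\nu'}(A)=o(T^{\alpha-1})$ for every $\alpha>0$. Feeding these into the elementary bound $\mathrm{kl}(p,q)\ge p\ln\frac1q-\ln2$ makes the right-hand side of order $(1-\alpha)\ln T$; dividing by $\ln T$, taking $\liminf_{T\to\infty}$ and then $\alpha\downarrow0$ establishes the claimed inequality for each \emph{fixed} feasible alternative.

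The delicate step is passing the infimum \emph{inside} the $\liminf$, and this is where I expect the main obstacle to lie. In the classical bound this is painless because $\mathbb{E}[N_k(T)]$ multiplies the single arm-dependent factor $\KL(\nu_k,\tilde\nu_k)$, so the infimum factors out and one argues per-alternative before taking a supremum of rates. Here the offline term $N'_k\,\KL(\nu'_k,\tilde\nu'_k)$ is \emph{added}, not multiplied, and the constraint ties $\tilde\nu'_k$ to $\tilde\nu_k$, so the minimizing alternative generally drifts with $T$ toward the boundary $\mathbb{E}[\tilde\nu_k]=\mu_1$, where the optimality gap vanishes and the consistency control $\mathbb{P}_{\tilde\nu,\tilde\nu'}(A)\to0$ ceases to be uniform. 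I would resolve this by a subsequence/contradiction argument: assuming the $\liminf$ were below $1$, extract near-minimizing alternatives along a subsequence and use compactness of the constraint set to pass to a limiting alternative; when the limit keeps a strictly positive gap the change-of-measure inequality applies uniformly and yields a contradiction, while the boundary case (gap $\to0$) must be treated separately by showing the first term alone already forces $\mathbb{E}[N_k(T)]\gtrsim\ln T$, the offline penalty being lower order there. Throughout, the penalty must be carried as the \emph{exact} term $N'_k\,\KL(\nu'_k,\tilde\nu'_k)$ rather than absorbed into $o(\ln T)$, precisely because $N'_k$ may grow with $T$ or be infinite.
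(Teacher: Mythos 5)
Your core argument is the same as the paper's. The augmented log-likelihood decomposition you describe (offline block contributing $N'_k\,\KL(\nu'_k,\tilde\nu'_k)$, online block contributing $\mathbb{E}[N_k(T)]\,\KL(\nu_k,\tilde\nu_k)$) is exactly the paper's Lemma~\ref{lem:lemlower}, and the single-arm perturbation, data-processing inequality, and consistency-based asymptotics mirror the paper's proof of Theorem~\ref{thm:lowerb}; applying data processing to the event $\{N_k(T)\le T/2\}$ rather than to the $[0,1]$-valued variable $N_k(T)/T$ (the paper bounds the left side below by $kl\bigl(\mathbb{E}_{(\nu,\nu')}[N_k(T)]/T,\ \mathbb{E}_{(\tilde\nu,\tilde\nu')}[N_k(T)]/T\bigr)$) is an immaterial variant.

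The one genuine divergence is your final paragraph. The paper performs no subsequence or compactness argument: it establishes the bound for each fixed feasible alternative and then simply takes the infimum, treating the $o(\cdot)$ terms---which come from consistency on the \emph{alternative} instance and are therefore alternative-dependent---as if they were uniform over the feasible set. You are right that the $\inf$ sitting inside the $\liminf$ is the delicate point in this transfer setting, precisely because the additive offline penalty prevents the infimum from factoring out of $\mathbb{E}[N_k(T)]$ as it does in the classical Lai--Robbins bound. However, your proposed repair is itself incomplete: no compactness of $\{(\tilde\nu_k,\tilde\nu'_k):\mathbb{E}[\tilde\nu_k]>\mu_1,\ d_k(\tilde\nu_k,\tilde\nu'_k)\le L_k\}$ is assumed for general $(\mathcal{D},\mathcal{D}')$, KL divergence is only lower semicontinuous, and per-instance consistency yields no uniform rate along a $T$-dependent sequence of alternatives, so the assertion that the change-of-measure inequality ``applies uniformly'' in the positive-gap case would require justification that the proposal does not supply. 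In short, your proposal reproduces the paper's proof in substance; where it goes beyond the paper, it flags a real subtlety that the paper's own proof elides, but as sketched it does not rigorously resolve it.
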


Equivalently, defining
  $\mathcal{K}_{\inf}(\nu'_k; d_k, \tilde\nu_k, L_k)
  :=
  \inf\left\{
  \KL(\nu'_k,\tilde\nu_k') \mid \ \tilde\nu_k'\in\mathcal{D}',\  d_k(\tilde\nu_k,\tilde\nu_k') \le L_k\right\}$,
\begin{equation*}
  \liminf_{T\to\infty}
  \inf_{\substack{\tilde\nu_k\in\mathcal{D}\\ \mathbb{E}[\tilde\nu_k]>\mu_1}}
  \frac{\mathbb{E}[N_k(T)]\,\KL(\nu_k,\tilde\nu_k)
        + N_k'\,\mathcal{K}_{\inf}(\nu'_k; d_k, \tilde\nu_k, L_k)}
       {\ln T}
  \ge 1.
\end{equation*}

\noindent\textbf{Offline sample size $N'_k$:} We allow $N'_k$ to be any non-negative integer (possibly depending on $T$) or even $+\infty$. If $N'_k=0$ or $N'_k=o(\ln T)$, the offline term is negligible and the equation \eqref{eq:borne-classique} is recovered; if $N'_k=\infty$, then either $\mathcal{K}_{\inf}(\nu'_k; d_k,\tilde\nu_k,L_k)=0$ (reducing to equation \eqref{eq:borne-classique}) or consistency imposes no constraint on $\mathbb{E}[N_k(T)]$, allowing $\mathbb{E}[N_k(T)]=o(\ln T)$.
\\
\\
\textbf{Prior radius $L_k$:} If $L_k\to\infty$, then for any $\,\tilde\nu_k\,$ the constraint $d_k(\tilde\nu_k,\tilde\nu_k')\le L_k$ becomes vacuous, so $\mathcal{K}_{\inf}(\nu'_k; d_k,\tilde\nu_k,L_k)\to0$ and the classical bound is recovered. If $L_k=0$ and $d_k(\nu,\nu')=0\implies\nu=\nu'$, then $\nu_k'=\nu_k$ and
    \begin{equation*}
      \mathbb{E}[N_k(T)]\,\KL(\nu_k,\tilde\nu_k) + N'_k\,\KL(\nu_k',\tilde\nu_k)
      = (\mathbb{E}[N_k(T)] + N'_k)\,\KL(\nu_k,\tilde\nu_k),
    \end{equation*}
    yielding the classical bound \eqref{eq:borne-classique} with effective sample size $\mathbb{E}[N_k(T)] + N'_k$.
\\
\\
\textbf{General implications:} Since $N'_k \,\mathcal{K}_{\inf}(\nu'_k; d_k,\tilde\nu_k,L_k)\ge0$, Theorem~\ref{thm:lowerb} is tighter than the classical bound; prior data on the optimal arm do not affect this lower bound, as the optimal arm is pulled linearly and its mean is estimated arbitrarily well online.

\noindent The proof of Theorem~\ref{thm:lowerb} is deferred to Appendix~\ref{sec:proof_thm1}.

\subsection{Gaussian case with mean‐based distance}\label{sec:partgausslowerbound}

We now instantiate Theorem~\ref{thm:lowerb} when both target and source families are univariate Gaussians with known variances. We use a mean-based distance:
\begin{equation*}
  \mathcal{D} = \{\mathcal{N}(\mu,\sigma^2)\},\quad
  \mathcal{D}' = \{\mathcal{N}(\mu,\sigma'^2)\},\quad
  d_k(\mathcal{N}(\mu,\sigma^2),\mathcal{N}(\tilde\mu,\sigma'^2))
    = |\mu - \tilde\mu|.
\end{equation*}
Hence the constraint $|\tilde\mu_k - \tilde\mu_k'|\le L_k$. We will also note $(x)_+ := \max\{x,0\}$.

\begin{corollary}[Gaussian lower bound]
Under the Gaussian setting with known variances and mean‐based distance defined above, any consistent algorithm satisfies for each suboptimal arm $k$:
\begin{equation}\label{cor:optimalitegauss}
  \liminf_{T\to\infty}
  \frac{\mathbb{E}[N_k(T)]}{\ln T}\,\frac{(\mu_1-\mu_k)^2}{2\,\sigma^2}
  + \frac{N_k'}{\ln T}\,\frac{(\mu_1-\mu_k'-L_k)_+^2}{2\,\sigma'^2}
  \ge 1.
\end{equation}
\end{corollary}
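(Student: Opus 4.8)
The plan is to derive the corollary as a direct instantiation of Theorem~\ref{thm:lowerb}, using the equivalent $\mathcal{K}_{\inf}$ form of the bound. The main work is a reduction to two explicit Gaussian computations: the online KL term $\KL(\nu_k,\tilde\nu_k)$ and the offline penalty $\mathcal{K}_{\inf}(\nu'_k;d_k,\tilde\nu_k,L_k)$. For univariate Gaussians with equal known variance $\sigma^2$, the closed form $\KL(\mathcal{N}(\mu_k,\sigma^2),\mathcal{N}(\tilde\mu_k,\sigma^2))=(\mu_k-\tilde\mu_k)^2/(2\sigma^2)$ is standard, and similarly on the source side with variance $\sigma'^2$.

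First I would evaluate the $\mathcal{K}_{\inf}$ term explicitly. By definition it is the infimum of $\KL(\nu'_k,\tilde\nu'_k)=(\mu'_k-\tilde\mu'_k)^2/(2\sigma'^2)$ over source distributions $\tilde\nu'_k=\mathcal{N}(\tilde\mu'_k,\sigma'^2)$ subject to $d_k(\tilde\nu_k,\tilde\nu'_k)=|\tilde\mu_k-\tilde\mu'_k|\le L_k$. This is a one-dimensional problem: minimize $(\mu'_k-\tilde\mu'_k)^2$ over $\tilde\mu'_k$ in the interval $[\tilde\mu_k-L_k,\tilde\mu_k+L_k]$. The minimizer is the projection of $\mu'_k$ onto that interval, giving $\mathcal{K}_{\inf}=\bigl((|\mu'_k-\tilde\mu_k|-L_k)_+\bigr)^2/(2\sigma'^2)$.

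Next I would plug this into the equivalent form of Theorem~\ref{thm:lowerb} and analyze the outer infimum over $\tilde\nu_k$ with $\mathbb{E}[\tilde\nu_k]=\tilde\mu_k>\mu_1$. The online term is increasing in how far $\tilde\mu_k$ is pushed above $\mu_k$, and the offline penalty depends on $\tilde\mu_k$ through the projection. To match the stated corollary exactly one restricts attention to the boundary choice $\tilde\mu_k\downarrow\mu_1$: there the online term tends to $(\mu_1-\mu_k)^2/(2\sigma^2)$, and substituting $\tilde\mu_k=\mu_1$ into the offline penalty (with $\mu'_k<\mu_1$ for a suboptimal source, so the absolute value resolves to $\mu_1-\mu'_k$) yields $(\mu_1-\mu'_k-L_k)_+^2/(2\sigma'^2)$. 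Dividing by $\ln T$, taking $\liminf$, and invoking the $\ge 1$ bound of the theorem reproduces inequality \eqref{cor:optimalitegauss}.

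The main obstacle is justifying that the relevant infimum over $\tilde\nu_k$ is attained (in the limiting sense) at $\tilde\mu_k=\mu_1$ rather than at some larger value that might reduce the \emph{sum} of the two terms; one must check that moving $\tilde\mu_k$ above $\mu_1$ only increases the online term faster than it can decrease the offline penalty, or equivalently argue that the stated form is the correct lower bound rather than a strengthening. A careful treatment notes that Theorem~\ref{thm:lowerb} already takes the infimum over $\tilde\nu_k$, so the corollary as written is a \emph{valid consequence} obtained by evaluating at a specific admissible $\tilde\mu_k=\mu_1$ (a limiting feasible point), giving a lower bound that is at least as large as a particular term in that infimum; the delicate point is confirming that the positivity of the offline term is preserved under the $(\cdot)_+$ truncation and that the limit $\tilde\mu_k\downarrow\mu_1$ is handled correctly when $L_k$ is large enough to make the penalty vanish.
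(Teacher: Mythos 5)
Your proposal is correct and follows essentially the same route as the paper's (very brief) proof sketch: instantiate Theorem~\ref{thm:lowerb} with the Gaussian KL formulas, compute the source-side term as a projection of $\mu'_k$ onto the interval $[\tilde\mu_k-L_k,\,\tilde\mu_k+L_k]$ (producing the positive part), and evaluate along feasible points $\tilde\mu_k\downarrow\mu_1$, which upper-bounds the infimum in the theorem and hence inherits the $\ge 1$ bound. One small caveat: your parenthetical assumption $\mu'_k<\mu_1$ need not hold (feasibility of the instance only forces $\mu'_k<\mu_1+L_k$), but in the remaining case $\mu_1\le\mu'_k<\mu_1+L_k$ both $\bigl((|\mu'_k-\mu_1|-L_k)_+\bigr)^2$ and $\bigl((\mu_1-\mu'_k-L_k)_+\bigr)^2$ vanish, so the identity you need still holds and the argument goes through unchanged.
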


\begin{proof}[Sketch of proof]
Apply Theorem~\ref{thm:lowerb} with
\begin{equation*}
  \KL(\mathcal{N}(\mu,\sigma^2), \mathcal{N}(\tilde\mu,\sigma^2))
  = \frac{(\mu-\tilde\mu)^2}{2\,\sigma^2},
\end{equation*}
and similarly for source-to-source KL. Constraining $\mathbb{E}[\tilde\nu_k]>\mu_1$ yields the first term in the bound, while enforcing $|\tilde\mu_k - {\tilde \mu}_k'|\le L_k$ produces the $(\cdot)_+$ in the second term.
\end{proof}
Equivalently, solving for the expected number of pulls yields:
\begin{equation}\label{eq:optimalitegauss2}
  \mathbb{E}[N_k(T)]
  \ge
  \frac{2\,\sigma^2}{(\mu_1-\mu_k)^2}
  \Bigl(\ln T
    - N_k'\,\tfrac{(\mu_1-\mu_k'-L_k)_+^2}{2\,\sigma'^2}\Bigr)_+
  + o(\ln T).
\end{equation}

\section{KL-UCB Algorithm}

In this section we first recall the classical \textsc{KL-UCB} algorithm, we then introduce our \textsc{KL-UCB-Transfer} variant, which augments the usual index with an extra penalty term to account for prior offline samples and will match Equation \ref{eq:optimalitegauss2}.

\subsection{Classical \textsc{KL-UCB}}

For the classical bandit problem, the \textsc{KL-UCB} algorithm introduced in \cite{pmlr-v19-maillard11a} matches the asymptotic lower bound (Eq \ref{eq:borne-classique}). The core idea of this algorithm is to select the arm with the highest potential mean. To do so, we define an index $U_a(t)$ for each arm $a$, and pull the arm with the highest index:
\[
U_a(t) := \max \left\{ \mathbb{E}[\nu] \ \middle| \ N_a(t) \, \KL(\pi_{\mathcal{D}}(\hat\nu_a(t)), \nu) \leq \delta_t,\ \nu \in \mathcal{D} \right\}
\]
where $\pi_{\mathcal{D}}(\hat\nu_k(t))$ is the projection of the empirical distribution $\hat\nu_k(t)$ onto the space of admissible distributions $\mathcal{D}$. This algorithm is optimal for bounded reward distributions (see \cite{pmlr-v19-garivier11a, Cappe2013KL}).
We typically choose $\delta_t$ to be of the order of $\ln t$, possibly with an additional lower-order term depending on $\mathcal{D}$.

\subsection{\textsc{KL-UCB-Transfer}}\label{sec:KLUCB}

In our transfer learning setting, assuming Gaussian distributions with known variance as in Section \ref{sec:partgausslowerbound}, we define the cost function with respect to the prior data as follows:
\begin{equation*}
    f_a^+(q) := N'_a \frac{\left( q - (\hat \mu'_a(t) + L_a) \right)_+^2}{2 \sigma'^2}
\end{equation*}

In this setting, the index becomes:
\begin{equation}\label{eq:indiceKL}
    U_a(t) := \max \left\{ q \ \middle| \ N_a(t)\frac{( q - \hat \mu_a(t))_+^2}{2 \sigma^2} + f_a^+(q) \leq \delta_t \right\},
\end{equation}

with $\delta_t = \ln t + 3\ln(\max(1,\ln t))$, we compute $U_a(t)$ each turn and pull the arm with highest index; the positive part ensures a non-empty set. An explicit form of $U_a(t)$ is given in the supplement, enabling $O(1)$ computation. The idea behind the algorithm is to incorporate the cost of deviating from the prior information into the estimation of the highest potential mean. Thus, we recover the lower bound given in section \ref{sec:partgausslowerbound}.

\begin{remark}
    If there is no prior data, \textsc{KL-UCB-Transfer} reverts to classical \textsc{KL-UCB}.
\end{remark}

\paragraph{Design challenges and points of attention}
The key challenge in \textsc{KL-UCB-Transfer} is to balance prior and online data without merging them: unlike \cite{rahul2024transfer}, which fuses means from different but similar distributions, we keep empirical and prior means separate, using the positive part in $f_a^+(q)$ to ensure a non-empty confidence set while penalizing exceedances above the prior.

\section{Regret bound}\label{sec:RegretKL}

Working in the setting of section \ref{sec:partgausslowerbound} with our \textsc{KL-UCB-Transfer} algorithm, we have control over the number of pulls for each suboptimal arm through the following theorem.
\begin{theorem}[\textsc{KL-UCB-Transfer} Regret Bound]\label{thm:KLREGRET}
Suppose for each arm \(k\) we have \(N_k'\) i.i.d.\ samples from 
\(\nu_k'=\mathcal{N}(\mu_k',\sigma'^2)\) and that the true target law is 
\(\nu_k=\mathcal{N}(\mu_k,\sigma^2)\) with \(\lvert\mu_k-\mu_k'\rvert\le L_k\).  
Then under \textsc{KL-UCB-Transfer}, every suboptimal arm \(a\) satisfies
\[
\mathbb{E}[N_a(T)]
\;\le\;
\frac{2\,\sigma^2}{(\mu_1-\mu_a)^2}
\left(\ln T 
- N_a'\,\frac{\bigl(\mu_1-(\mu_a'+L_a)\bigr)_+^2}{2\,\sigma'^2}\right)_+
+O\bigl((\ln T)^{2/3}\bigr).
\]
\end{theorem}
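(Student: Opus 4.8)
The plan is to bound $\mathbb{E}[N_a(T)]$ for a fixed suboptimal arm $a$ by controlling the number of rounds in which the index $U_a(t)$ exceeds the index of the optimal arm. The standard KL-UCB decomposition says that arm $a$ can be pulled at round $t$ only if either (i) the optimal arm is under-estimated, i.e.\ $U_1(t) \le \mu_1$, or (ii) arm $a$ has been pulled enough times that its index $U_a(t)$ is still above $\mu_1$ despite a large $N_a(t)$. First I would show that event (i) contributes only $O(1)$ (or lower-order) pulls in expectation, using a self-normalized concentration bound for the optimal arm's empirical mean together with the choice $\delta_t = \ln t + 3\ln(\max(1,\ln t))$; this is essentially the classical argument and I expect to reuse the deviation control that makes $\sum_t \mathbb{P}(U_1(t)\le\mu_1)$ summable.

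The heart of the proof is bounding the pulls from event (ii). Here I would exploit the explicit structure of the transfer index in Equation~\eqref{eq:indiceKL}: $U_a(t) > \mu_1$ is equivalent to
\[
N_a(t)\,\frac{(\mu_1 - \hat\mu_a(t))_+^2}{2\sigma^2} + N_a'\,\frac{\bigl(\mu_1 - (\hat\mu_a'(t)+L_a)\bigr)_+^2}{2\sigma'^2} < \delta_t.
\]
On a high-probability event where $\hat\mu_a(t)$ is close to $\mu_a$ and $\hat\mu_a'(t)$ is close to $\mu_a'$, the second (prior) term is close to the deterministic quantity $N_a'\,(\mu_1-(\mu_a'+L_a))_+^2/(2\sigma'^2)$, so the constraint forces
\[
N_a(t)\,\frac{(\mu_1-\mu_a)^2}{2\sigma^2} \lesssim \delta_t - N_a'\,\frac{(\mu_1-(\mu_a'+L_a))_+^2}{2\sigma'^2},
\]
which after rearranging yields exactly the leading term $\tfrac{2\sigma^2}{(\mu_1-\mu_a)^2}\bigl(\ln T - N_a'\,(\mu_1-(\mu_a'+L_a))_+^2/(2\sigma'^2)\bigr)_+$. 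The positive part arises because $N_a(t)$ is nonnegative, so once the prior term alone already exceeds $\delta_t$ the online budget is zero.

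The main obstacle, and the source of the $O((\ln T)^{2/3})$ remainder, is making the substitution $\hat\mu_a'(t)\approx\mu_a'$ quantitatively uniform in the possibly $T$-dependent (or infinite) size $N_a'$. The fluctuation of $\hat\mu_a'(t)$ around $\mu_a'$ is of order $\sigma'/\sqrt{N_a'}$, and it enters the prior term multiplied by $N_a'$, so the induced error in the threshold scales like $\sqrt{N_a'}$ times the deviation — potentially large. I would therefore introduce a carefully tuned slack, replacing $L_a$ by $L_a + \varepsilon_T$ with a deviation bound that lets the error be absorbed; optimizing this slack against the probability of the bad deviation event (controlled by a sub-Gaussian tail in $\hat\mu_a'$) is what produces the sub-logarithmic exponent $2/3$. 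Concretely I would choose the confidence level so that the deviation contributes $O((\ln T)^{2/3})$ both to the leading-term perturbation and to the summed failure probabilities, then collect the event-(i) term, the deviation term, and the main term. I expect the delicate bookkeeping here — ensuring the bound holds whether $N_a'$ is small, large, $T$-dependent, or infinite, without the prior fluctuations inflating the regret — to be the genuinely hard part, exactly as flagged in the paper's introduction.
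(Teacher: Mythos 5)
Your two-event decomposition (optimal arm underestimated vs.\ suboptimal arm still competitive) is the same as the paper's, but both halves of your sketch have genuine gaps. The first concerns event (i): this is \emph{not} ``essentially the classical argument.'' The index $U_1(t)$ of the optimal arm contains the random prior penalty $f_1^+(\mu_1)=N_1'\bigl(\mu_1-(\hat\mu_1'+L_1)\bigr)_+^2/(2\sigma'^2)$, which can only lower $U_1(t)$ and therefore makes underestimation of arm $1$ \emph{more} likely than in the no-prior case; indeed, if the hypothesis $\lvert\mu_1-\mu_1'\rvert\le L_1$ failed and $N_1'$ were large, $U_1(t)$ would sit below $\mu_1$ forever and regret would be linear. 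So the proof must invoke this hypothesis precisely here, and simply ``reusing the deviation control that makes $\sum_t\mathbb{P}(U_1(t)\le\mu_1)$ summable'' cannot work. The paper handles this by conditioning on $f_1^+$: a peeling/martingale argument gives $\mathbb{P}(\mu_1>U_1(t)\mid f_1^+)=O\bigl(e^{f_1^+(\mu_1)}/(t\ln t)\bigr)$ (Lemma~\ref{lem:lemma4}), and the exponential moment $\mathbb{E}[e^{f_1^+(\mu_1)}]$ is then bounded by a constant uniformly in $N_1'$ (Lemma~\ref{lem:lemma3}), precisely because $\eta:=L_1+\mu_1'-\mu_1\ge0$. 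Both ingredients are new relative to the classical proof and are absent from your sketch.

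The second gap is quantitative: your truncation scheme for the prior fluctuation on arm $a$ cannot deliver the stated bound. Write $f_a^+(\mu_1)=(\beta-W)_+^2/2$ with $W\sim\mathcal{N}(0,1)$ and $\beta=\sqrt{N_a'}\,(\mu_1-\mu_a'-L_a)/\sigma'$; the relevant regime is $0\le\beta\le\sqrt{2\delta_T}$. On a truncation event $\{\lvert W\rvert\le c\}$ the online budget is inflated by up to $\beta c$, while the complementary event, if charged as ``summed failure probabilities'' $T\,\mathbb{P}(\lvert W\rvert>c)$, forces $c\gtrsim\sqrt{2\ln T}$; but then $c$ can exceed $\beta$, the lower bound $(\beta-c)_+^2/2$ on the prior term becomes vacuous, and you recover only the classical bound — equivalently, the perturbation $\beta c$ can be of order $\ln T$, a constant-factor inflation rather than $O((\ln T)^{2/3})$. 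The two requirements you ask the tuning to satisfy simultaneously are incompatible. The paper avoids truncation altogether: Lemma~\ref{lem:lemma2} computes $\mathbb{E}\bigl[(\delta_T-f_a^+(\mu_1))_+\bigr]$ in closed form and shows it exceeds the deterministic budget by only $O(\sqrt{\delta_T})$. (Your route could be repaired by noting that on the bad event one may drop $f_a^+\ge0$ and fall back to the classical $O(\ln T)$ count, so that $c\sim\sqrt{\ln\ln T}$ suffices and $\beta c=O(\sqrt{\ln T\,\ln\ln T})$; that idea is missing from the sketch.) Relatedly, your attribution of the $2/3$ exponent is wrong: it does not come from the prior fluctuations (those cost only $O(\sqrt{\ln T})$) but from the slack $\varepsilon$ in the Chernoff control of the \emph{online} mean $\hat\mu_{a,s}$, balancing the $O(\varepsilon\ln T)$ inflation of the main term against the $O(\varepsilon^{-2})$ tail sum, optimized at $\varepsilon=(\ln T)^{-1/3}$ — a step your sketch compresses into ``on a high-probability event where $\hat\mu_a(t)$ is close to $\mu_a$.''
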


Hence, we recover optimality according to Eq \eqref{eq:optimalitegauss2}. The exact expression of the \(O\bigl((\ln T)^{2/3}\bigr)\) term can be derived in the proof of Theorem~\ref{thm:KLREGRET} in the Appendix \ref{sec:proof_thm2}. The main challenge is handling $N'_a$, which may be arbitrarily large and depend on $T$.

\section{Experiments}\label{sec:sim}

We evaluate \textsc{KL-UCB-Transfer} in three simulations on a 6-armed bandit with unit-variance Gaussian rewards and true means
\[
\mu = (1.0,\;0.9,\;0.8,\;0.7,\;0.6,\;0.5),
\]
so arm~1 is optimal. When injecting prior data on arm~$k$, we use $N'_k=1000$ synthetic samples from \(\mathcal{N}(\mu'_k,1)\) to observe a noticeable improvement in the lower bound (Eq.~\eqref{eq:optimalitegauss2}).

\paragraph{Exploration bonus in simulations}
Instead of \(\delta_t=\log t+3\log\log t\), we set \(\delta_t=(1+\varepsilon)\log t\) with \(\varepsilon=\tfrac1{20}\), as in \cite{Cappe2013KL}. This reduces over-exploration (\(\varepsilon\log T \le 3\log\log T\) for \(T<10^{152}\)) and inflates the \(\log T\) regret term by only a factor \(1+\varepsilon\). Adapting the theory to this choice of \(\delta_t\) is straightforward, and simulations confirm its improved performance.

\subsection*{Simulation 1: Priors on all arms}

In this study we apply the \emph{same} prior shift $(\delta,L)$ to every arm. A prior on arm $k$ yields asymptotic improvement whenever $\mu_1 > \mu'_k + L_k.$ Concretely, we set
\[
\mu'_k = \mu_k + \delta, \quad L_k = L \quad\text{for }k=1,\dots,6,
\]
and sweep through four increasingly accurate settings:
\[
(\delta,L)\in\{(0.20,0.40),\;(0.11,0.20),\;(0.05,0.10),\;(0.00,0.05)\}.
\]
We plot five curves: the no-prior baseline ($N'_k = 0$ for all arms) plus one for each $(\delta,L)$. In the first setting $(0.20,0.40)$, for every suboptimal arm $k$ we have $\mu_1 < \mu'_k + L_k$, so there is no long-term regret benefit. In the last three (tighter) settings, some arms satisfy $\mu_1 > \mu'_k + L_k$, hence those priors do improve long-term regret. Figure~\hyperref[fig:pic1]{1(a)}  shows that only when $(\delta,L)$ is small enough does \textsc{KL-UCB-Transfer} outperform the no-prior baseline.

\subsection*{Simulation 2: Priors only on the optimal arm}

As shown in (Eq \eqref{eq:indiceKL}), adding prior data for arm~$a$ reduces its index $U_a(t)$; hence a prior on a suboptimal arm is beneficial, while a prior on the optimal arm may not be. In our simulations, we focus on the effect of a prior placed only on the optimal arm. We restrict priors to the optimal arm ($k=1$) and compare three configurations:

\noindent$\bullet$ No prior (baseline).\\
$\bullet$  Mildly optimistic (Prior 1): $\mu'_1 = \mu_1 + 0.001,\;L_1 = 0.004$.\\
$\bullet$  Pessimistic (Prior 2):       $\mu'_1 = \mu_1 - 0.010,\;L_1 = 0.210$.

\noindent Define the hardship parameter $\eta := L_1 - (\mu'_1 - \mu_1)$ (see proof of Theorem~\ref{thm:KLREGRET}). Although both priors are asymptotically harmless, a small $\eta$ significantly slows early learning. Figure~\hyperref[fig:pic2]{1(b)} illustrates that a very small $\eta$ (mildly optimistic prior) leads to higher initial regret, whereas a larger $\eta$ (pessimistic prior) grants an early advantage, even though both settings converge to the same asymptotic performance.

\vspace{-2pt}
\begin{figure}[htbp]
\centering
{\subfigure[Simulation 1 with Regret up to $T=10^6$. Same priors on all arms.]{%
\label{fig:pic1}
\includegraphics[width=0.32\textwidth]{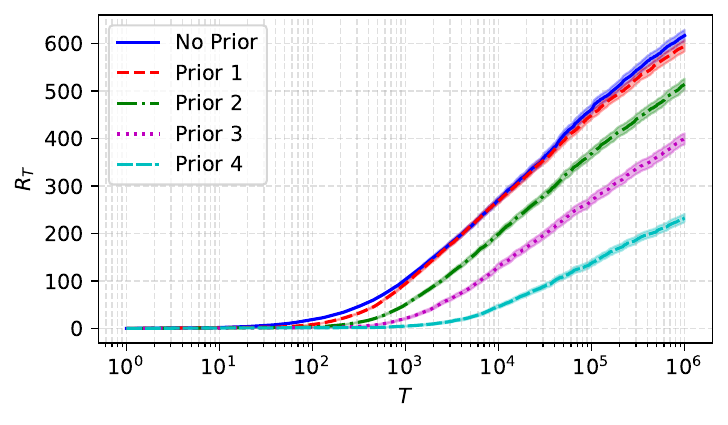}}}
{
\subfigure[Simulation 2 with Regret up to $T=10^4$. Priors on the optimal arm only.\label{fig:pic2}]{%
\includegraphics[width=0.31\textwidth]{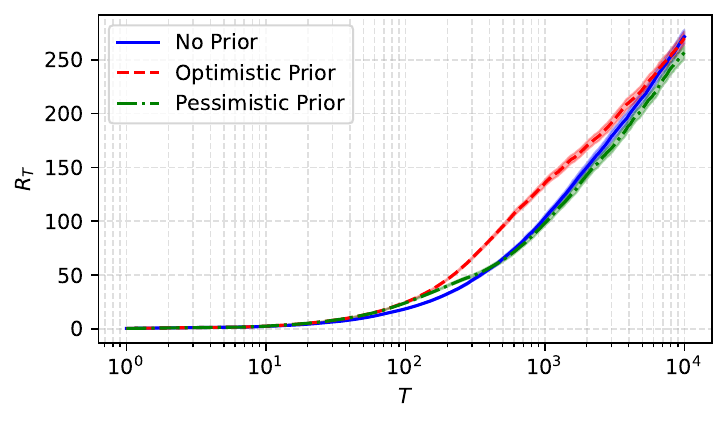}
}}{
\subfigure[Simulation 3, with Regret up to $T=10^6$ with prior $(\delta,L)=(0.05,0.10)$.  \label{fig:sim3}
]{
  \includegraphics[width=0.32\textwidth]{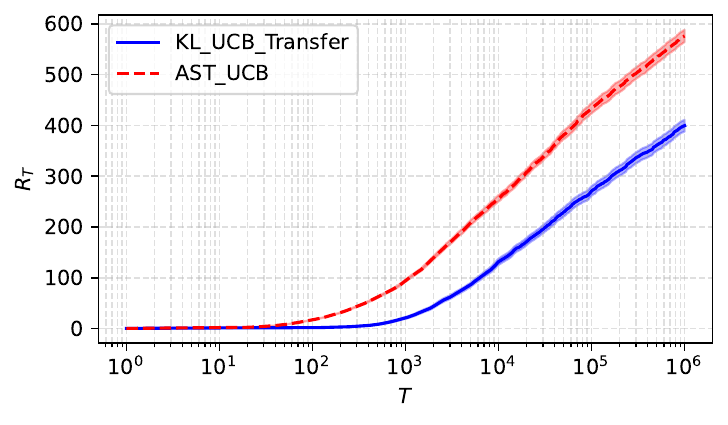}
  }}
  \caption{100 runs for each plot. Error bars show standard error of the mean (i.e.\ sample standard deviation divided by $\sqrt{100}$) \label{fig:plot1}}
\end{figure}
\vspace{-4pt}

\subsection*{Simulation 3: Comparison with AST-UCB}

We compare our algorithm \textsc{KL-UCB-Transfer} to \textsc{AST-UCB} from \cite{rahul2024transfer}.  \textsc{AST-UCB} is designed for an \emph{episodic} non-contextual bandit: the learner plays a sequence of episodes on the same arms, where at the start of each episode the arm means may shift by at most $L$ from their values in the previous episode; \textsc{AST-UCB} uses past-episode data to compute the index in the new episode. In this simulation we use the third prior setting from Simulation 1, namely $(\delta,L)=(0.05,0.10)$ on all arms, since it tightens the lower bound (Eq.~\eqref{eq:optimalitegauss2}) and allows us to compare how efficiently our algorithm and \textsc{AST-UCB} leverage the prior data. We run \textsc{AST-UCB} with its matching episodic-shift parameter $L=0.10$. Figure~\hyperref[fig:sim3]{1(c)} demonstrates that \textsc{KL-UCB-Transfer} yields substantially lower regret than \textsc{AST-UCB} under this prior.

\paragraph{Takeaway message}
Overall, these simulations confirm that \textsc{KL-UCB-Transfer} can harness identical priors on every arm to reduce regret when the priors are accurate, reveal the nuanced short‐term trade‐offs when the prior is applied solely to the optimal arm, and consistently outperforms existing baselines in this transfer setting.

\section{Conclusion}

\noindent\textbf{Asymptotic Lower Bound and KL-UCB-Transfer.}
In this work we have derived a sharp, problem-dependent asymptotic lower bound on cumulative regret that holds for any algorithm operating under arbitrary offline prior data.  Focusing on the Gaussian bandit setting, we then introduced \textsc{KL-UCB-Transfer}, a simple modification of KL-UCB that incorporates priors via an extra KL penalty.  We proved that \textsc{KL-UCB-Transfer} \emph{matches} the general lower bound exactly when all arms are Gaussian with known variance and the prior samples $N'_k$ are fixed, i.i.d.\ observations.

\medskip
\noindent\textbf{Simulation Results and Trade-offs.}
Our simulations confirm that, in regimes where the priors on suboptimal arms are sufficiently accurate, \textsc{KL-UCB-Transfer} enjoys strictly lower regret than the classical, no-prior KL-UCB.  We also highlighted the short-term trade-offs that arise when one supplies priors on the optimal arm.

\medskip
\noindent\textbf{Future Directions.}
Looking forward, it would be interesting to extend our analysis beyond Gaussian rewards to exponential or bounded families, and to design transfer-aware algorithms that optimally balance prior bias against data-driven exploration in non-stationary or structured bandit environments.  Moreover, in our current setting the prior sample sizes $N'_k$ are non-random and the prior data are assumed i.i.d.  A promising direction for future work is to study the case where prior data come from another online bandit process.

More broadly, our results offer a clean and tractable foundation to better understand transfer in simple settings, which can serve as a stepping stone toward principled transfer algorithms in richer and more realistic environments.


\bibliography{acml25}

\begin{thebibliography}{21}
\providecommand{\natexlab}[1]{#1}
\providecommand{\url}[1]{\texttt{#1}}
\expandafter\ifx\csname urlstyle\endcsname\relax
  \providecommand{\doi}[1]{doi: #1}\else
  \providecommand{\doi}{doi: \begingroup \urlstyle{rm}\Url}\fi

\bibitem[Bellot et~al.(2023)Bellot, Malek, and Chiappa]{NEURIPS2023_8a8ce53b}
Alexis Bellot, Alan Malek, and Silvia Chiappa.
\newblock Transportability for bandits with data from different environments.
\newblock In A.~Oh, T.~Naumann, A.~Globerson, K.~Saenko, M.~Hardt, and S.~Levine, editors, \emph{Advances in Neural Information Processing Systems}, volume~36, pages 44356--44381. Curran Associates, Inc., 2023.
\newblock URL \url{https://proceedings.neurips.cc/paper_files/paper/2023/file/8a8ce53beb3775522305e0a6033d4455-Paper-Conference.pdf}.

\bibitem[Cai et~al.(2024)Cai, Cai, and Li]{10.1214/23-AOS2341}
Changxiao Cai, T.~Tony Cai, and Hongzhe Li.
\newblock {Transfer learning for contextual multi-armed bandits}.
\newblock \emph{The Annals of Statistics}, 52\penalty0 (1):\penalty0 207 -- 232, 2024.
\newblock \doi{10.1214/23-AOS2341}.
\newblock URL \url{https://doi.org/10.1214/23-AOS2341}.

\bibitem[Capp{\'e} et~al.(2013)Capp{\'e}, Garivier, Maillard, Munos, and Stoltz]{Cappe2013KL}
Olivier Capp{\'e}, Aur{\'e}lien Garivier, Odalric-Ambrym Maillard, R{\'e}mi Munos, and Gilles Stoltz.
\newblock Kullback--leibler upper confidence bounds for optimal sequential allocation.
\newblock \emph{The Annals of Statistics}, 41\penalty0 (3):\penalty0 1516--1541, June 2013.
\newblock \doi{10.1214/13-AOS1119}.
\newblock URL \url{https://doi.org/10.1214/13-AOS1119}.

\bibitem[Deng et~al.(2025)Deng, Kyrki, and Baumann]{deng2025transferlearninglatentcontextual}
Mingwei Deng, Ville Kyrki, and Dominik Baumann.
\newblock Transfer learning in latent contextual bandits with covariate shift through causal transportability, 2025.
\newblock URL \url{https://arxiv.org/abs/2502.20153}.

\bibitem[Garivier and Cappé(2011)]{pmlr-v19-garivier11a}
Aurélien Garivier and Olivier Cappé.
\newblock The kl-ucb algorithm for bounded stochastic bandits and beyond.
\newblock In Sham~M. Kakade and Ulrike von Luxburg, editors, \emph{Proceedings of the 24th Annual Conference on Learning Theory}, volume~19 of \emph{Proceedings of Machine Learning Research}, pages 359--376, Budapest, Hungary, 09--11 Jun 2011. PMLR.
\newblock URL \url{https://proceedings.mlr.press/v19/garivier11a.html}.

\bibitem[Gheshlaghi~azar et~al.(2013)Gheshlaghi~azar, Lazaric, and Brunskill]{NIPS2013_062ddb6c}
Mohammad Gheshlaghi~azar, Alessandro Lazaric, and Emma Brunskill.
\newblock Sequential transfer in multi-armed bandit with finite set of models.
\newblock In C.J. Burges, L.~Bottou, M.~Welling, Z.~Ghahramani, and K.Q. Weinberger, editors, \emph{Advances in Neural Information Processing Systems}, volume~26. Curran Associates, Inc., 2013.
\newblock URL \url{https://proceedings.neurips.cc/paper_files/paper/2013/file/062ddb6c727310e76b6200b7c71f63b5-Paper.pdf}.

\bibitem[Honda and Takemura(2010)]{honda2010asymptotically}
Junya Honda and Akimichi Takemura.
\newblock An asymptotically optimal bandit algorithm for bounded support models.
\newblock In \emph{COLT}, pages 67--79, 2010.

\bibitem[Honda and Takemura(2015)]{honda2015imed}
Junya Honda and Akimichi Takemura.
\newblock Non-asymptotic analysis of a new bandit algorithm for semi-bounded rewards.
\newblock \emph{Journal of Machine Learning Research}, 16\penalty0 (113):\penalty0 3721--3756, 2015.
\newblock URL \url{http://jmlr.org/papers/v16/honda15a.html}.

\bibitem[Kaufmann et~al.(2012)Kaufmann, Korda, and Munos]{kaufmann2012thompson}
Emilie Kaufmann, Nathaniel Korda, and Remi Munos.
\newblock Thompson sampling: An asymptotically optimal finite-time analysis.
\newblock In \emph{International Conference on Algorithmic Learning Theory (ALT)}, 2012.

\bibitem[Lai and Robbins(1985)]{LAI19854}
T.L Lai and Herbert Robbins.
\newblock Asymptotically efficient adaptive allocation rules.
\newblock \emph{Advances in Applied Mathematics}, 6\penalty0 (1):\penalty0 4--22, 1985.
\newblock ISSN 0196-8858.
\newblock \doi{https://doi.org/10.1016/0196-8858(85)90002-8}.
\newblock URL \url{https://www.sciencedirect.com/science/article/pii/0196885885900028}.

\bibitem[Lattimore and Szepesv{\'a}ri(2020)]{lattimore2020bandit}
Tor Lattimore and Csaba Szepesv{\'a}ri.
\newblock \emph{Bandit algorithms}.
\newblock Cambridge University Press, 2020.

\bibitem[Maillard(2019)]{maillard:tel-02162189}
Odalric-Ambrym Maillard.
\newblock \emph{{Mathematics of Statistical Sequential Decision Making}}.
\newblock Habilitation {\`a} diriger des recherches, {Universit{\'e} de Lille, Sciences et Technologies}, February 2019.
\newblock URL \url{https://hal.science/tel-02162189}.

\bibitem[Maillard et~al.(2011)Maillard, Munos, and Stoltz]{pmlr-v19-maillard11a}
Odalric-Ambrym Maillard, Rémi Munos, and Gilles Stoltz.
\newblock A finite-time analysis of multi-armed bandits problems with kullback-leibler divergences.
\newblock In Sham~M. Kakade and Ulrike von Luxburg, editors, \emph{Proceedings of the 24th Annual Conference on Learning Theory}, volume~19 of \emph{Proceedings of Machine Learning Research}, pages 497--514, Budapest, Hungary, 09--11 Jun 2011. PMLR.
\newblock URL \url{https://proceedings.mlr.press/v19/maillard11a.html}.

\bibitem[Rahul and Katewa(2024{\natexlab{a}})]{rahul2024exploiting}
NR~Rahul and Vaibhav Katewa.
\newblock Exploiting adjacent similarity in multi-armed bandit tasks via transfer of reward samples.
\newblock \emph{arXiv preprint arXiv:2409.19975}, 2024{\natexlab{a}}.

\bibitem[Rahul and Katewa(2024{\natexlab{b}})]{rahul2024transfer}
NR~Rahul and Vaibhav Katewa.
\newblock Transfer in sequential multi-armed bandits via reward samples.
\newblock In \emph{2024 European Control Conference (ECC)}, pages 2083--2089. IEEE, 2024{\natexlab{b}}.

\bibitem[Robbins(1952)]{robbins1952some}
Herbert Robbins.
\newblock Some aspects of the sequential design of experiments.
\newblock \emph{Bulletin of the American Mathematical Society}, 58\penalty0 (5):\penalty0 527--535, 1952.
\newblock \doi{10.1090/S0002-9904-1952-09620-8}.
\newblock URL \url{https://projecteuclid.org/euclid.bams/1183519660}.

\bibitem[Saber et~al.(2021)Saber, Saci, Maillard, and Durand]{10.1007/978-3-030-86486-6_1}
Hassan Saber, L{\'e}o Saci, Odalric-Ambrym Maillard, and Audrey Durand.
\newblock Routine bandits: Minimizing regret on recurring problems.
\newblock In Nuria Oliver, Fernando P{\'e}rez-Cruz, Stefan Kramer, Jesse Read, and Jose~A. Lozano, editors, \emph{Machine Learning and Knowledge Discovery in Databases. Research Track}, pages 3--18, Cham, 2021. Springer International Publishing.
\newblock ISBN 978-3-030-86486-6.

\bibitem[Taylor and Stone(2009)]{taylor2009transfer}
Matthew~E Taylor and Peter Stone.
\newblock Transfer learning for reinforcement learning domains: A survey.
\newblock \emph{Journal of Machine Learning Research}, 10\penalty0 (7), 2009.

\bibitem[Thompson(1933)]{dc35850b-2ca1-314f-9e0d-470713436b17}
William~R. Thompson.
\newblock On the likelihood that one unknown probability exceeds another in view of the evidence of two samples.
\newblock \emph{Biometrika}, 25\penalty0 (3/4):\penalty0 285--294, 1933.
\newblock ISSN 00063444.
\newblock URL \url{http://www.jstor.org/stable/2332286}.

\bibitem[Wang et~al.(2022)Wang, Zhang, and Chaudhuri]{wang2022thompson}
Zhi Wang, Chicheng Zhang, and Kamalika Chaudhuri.
\newblock Thompson sampling for robust transfer in multi-task bandits.
\newblock In \emph{Proceedings of the 39th International Conference on Machine Learning}, 2022.
\newblock URL \url{https://arxiv.org/abs/2206.08556}.
\newblock To appear in ICML 2022.

\bibitem[Zhang and Bareinboim(2017)]{ijcai2017p186}
Junzhe Zhang and Elias Bareinboim.
\newblock Transfer learning in multi-armed bandits: A causal approach.
\newblock In \emph{Proceedings of the Twenty-Sixth International Joint Conference on Artificial Intelligence, {IJCAI-17}}, pages 1340--1346, 2017.
\newblock \doi{10.24963/ijcai.2017/186}.
\newblock URL \url{https://doi.org/10.24963/ijcai.2017/186}.

\end{thebibliography}

\appendix
\vspace{-0.3cm}
\section{Proof of Theorem \ref{thm:lowerb}}\label{sec:proof_thm1}

We now prove Theorem \ref{thm:lowerb}. For this, we begin with the following lemma.

\begin{lemma}\label{lem:lemlower}
Let $(\nu,\nu')$ and $(\tilde{\nu},\tilde{\nu}')$ be two sets of arm distributions in our setting. Denote by $\mathbb{P}_{(\nu,\nu')\pi,T}$ the canonical probability measure of the bandit model using strategy $\pi$ and arm distributions $(\nu,\nu')$ up to time $T$. Then,
\begin{equation*}
    \KL(\mathbb{P}_{(\nu,\nu')\pi,T},\mathbb{P}_{(\tilde{\nu},\tilde{\nu}')\pi,T}) = \sum_{k = 1}^K \mathbb{E}_{(\nu,\nu')}\left[N_k(T) \right] \KL(\nu_k,\tilde{\nu}_k) + \sum_{k = 1}^K N'_k \KL(\nu_k',\tilde{\nu}_k').
\end{equation*}
\end{lemma}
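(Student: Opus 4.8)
The plan is to establish the KL-divergence decomposition by writing out the canonical probability measure of the entire bandit model---including both the offline phase and the online phase---as an explicit product of densities, and then using the chain rule for KL divergence together with the tower property of conditional expectation. The key observation is that the sample space factorizes: the offline data consist of $N'_k$ i.i.d.\ draws from the source $\nu'_k$ for each arm $k$ (independent of the strategy $\pi$), while the online data are generated sequentially by the interaction between $\pi$ and the target $\nu$. I would denote by $\mathbb{P}_{(\nu,\nu')\pi,T}$ the joint law of the offline samples $(Y_{k,i})_{k\le K,\,i\le N'_k}$ and the online trajectory $(a_1,X_1,\dots,a_T,X_T)$.

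First I would factorize the joint density. The offline likelihood ratio between $(\nu,\nu')$ and $(\tilde\nu,\tilde\nu')$ depends only on the primed distributions and contributes $\prod_{k=1}^K \prod_{i=1}^{N'_k} \frac{d\nu'_k}{d\tilde\nu'_k}(Y_{k,i})$; taking the logarithm and expectation under $\mathbb{P}_{(\nu,\nu')}$, each i.i.d.\ block yields $N'_k\,\KL(\nu'_k,\tilde\nu'_k)$ by the i.i.d.\ additivity of KL, giving the second sum directly. For the online part, the probability of selecting $a_t$ at round $t$ is determined by $\pi$ acting on the observed history (offline data plus past online observations) and is therefore \emph{identical} under both models---so all the strategy factors cancel in the likelihood ratio. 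What survives is $\prod_{t=1}^T \frac{d\nu_{a_t}}{d\tilde\nu_{a_t}}(X_t)$.

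Next I would handle the online term via the chain rule. Writing the log-likelihood ratio of the online rewards as $\sum_{t=1}^T \ln\frac{d\nu_{a_t}}{d\tilde\nu_{a_t}}(X_t)$ and taking the expectation $\mathbb{E}_{(\nu,\nu')}$, I would condition on the history up to time $t$: given the past (which determines $a_t$), the conditional expectation of $\ln\frac{d\nu_{a_t}}{d\tilde\nu_{a_t}}(X_t)$ equals $\KL(\nu_{a_t},\tilde\nu_{a_t})$, since $X_t\sim\nu_{a_t}$. Summing over $t$ and regrouping by arm via $N_k(T)=\sum_{t=1}^T \mathbf{1}\{a_t=k\}$, then applying the tower property, converts $\sum_{t=1}^T \KL(\nu_{a_t},\tilde\nu_{a_t})$ into $\sum_{k=1}^K \mathbb{E}_{(\nu,\nu')}[N_k(T)]\,\KL(\nu_k,\tilde\nu_k)$, which is the first sum. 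This is the standard divergence-decomposition argument (as in Lemma~15.1 of \cite{lattimore2020bandit}), adapted to carry the extra offline block.

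The main obstacle is bookkeeping rather than deep difficulty: one must set up the measure-theoretic factorization cleanly so that the strategy-dependent selection kernels provably cancel in the likelihood ratio, and one must justify that the offline and online contributions are genuinely additive (i.e.\ that the offline data enter the history but the source likelihood factors out separately from the target likelihood). Care is also needed to ensure all the relevant log-likelihood ratios are well-defined and integrable---implicitly requiring absolute continuity $\nu_k\ll\tilde\nu_k$ and $\nu'_k\ll\tilde\nu'_k$---so that each $\KL$ term is finite in the nontrivial case; when it is infinite the identity holds trivially. Once the factorization is in place, the two sums fall out immediately from the i.i.d.\ additivity of KL (offline) and the chain rule plus tower property (online).
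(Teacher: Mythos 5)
Your proposal is correct and follows essentially the same route as the paper's proof: factorize the joint density into the offline i.i.d.\ block and the online trajectory, observe that the policy kernels cancel in the likelihood ratio, and then take expectations---splitting the online sum by arm via the indicators $\mathbf{1}\{a_t=k\}$ and the tower property---to obtain the two sums. Your added remarks on absolute continuity and integrability are a minor technical refinement the paper leaves implicit, but the argument is the same divergence-decomposition adaptation of Lemma~15.1 of \cite{lattimore2020bandit}.
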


\noindent This lemma parallels Lemma 15.1 in \cite{lattimore2020bandit}, except for the term $\sum_{k=1}^K N'_k \KL(\nu'_k,\tilde{\nu}'_k)$ accounting for initial offline data.

\begin{proof}[of Theorem~\ref{thm:lowerb}]

Let $k$ be a suboptimal arm for $(\nu,\nu')$. Consider a modification $(\tilde{\nu},\tilde{\nu}') \in \mathcal{D} \times \mathcal{D}'$ such that arm $k$ becomes optimal, while keeping the distributions $(\nu_i,\nu_i')_{i \neq k}$ unchanged. Then, using the data-processing inequality (e.g., \cite[Lemma 3.3]{maillard:tel-02162189}) combined with the KL decomposition of Lemma \ref{lem:lemlower}, we get:
\begin{align*}
    \mathbb{E}_{(\nu,\nu')}\left[N_k(T)\right] \KL(\nu_k,\tilde{\nu}_k) + N'_k \KL(\nu_k',\tilde{\nu}_k') &\geq kl\left( \frac{\mathbb{E}_{(\nu,\nu')}[N_k(T)]}{T} , \frac{\mathbb{E}_{(\tilde{\nu},\tilde{\nu}')}[N_k(T)]}{T}\right).
\end{align*}
Now using that for all $p,q \in (0,1)$ we have
\begin{equation*}
    kl(p,q)  = (1-p)\ln\left( \frac{1}{1-q} \right) + \underbrace{p \ln (1/q)}_{\geq 0} +  \underbrace{p \ln (p) + (1-p)\ln(1-p)}_{\geq - \ln 2},
\end{equation*}
we have that $\mathbb{E}_{(\nu,\nu')}\left[N_k(T)\right] \KL(\nu_k,\tilde{\nu}_k) + N'_k \KL(\nu_k',\tilde{\nu}_k')$ is lower bounded by
\begin{equation*}
    \left(1 - \frac{\mathbb{E}_{(\nu,\nu')}[N_k(T)]}{T}\right) \ln \left( \frac{T}{T - \mathbb{E}_{(\tilde{\nu},\tilde{\nu}')}[N_k(T)]} \right) - \ln(2)
\end{equation*}
Using the consistency of the strategy $\pi$, we have for any $\alpha > 0$:
\begin{itemize}
    \item $\mathbb{E}_{(\nu,\nu')}[N_k(T)] = o(T^\alpha)$,
    \item $T - \mathbb{E}_{(\tilde{\nu},\tilde{\nu}')}[N_k(T)] = o(T^\alpha)$.
\end{itemize}

Therefore, for any $\alpha > 0$,
\begin{equation*}
    \frac{\mathbb{E}_{(\nu,\nu')}[N_k(T)] \KL(\nu_k,\tilde{\nu}_k) + N'_k \KL(\nu_k',\tilde{\nu}_k')}{\ln T} \geq (1 - o(T^{-(1-\alpha)}))(1 - \alpha) + o_{T \to \infty}(1).
\end{equation*}

Since this is true for all $\alpha > 0$, taking the infimum over all $(\tilde{\nu},\tilde{\nu}') \in \mathcal{D} \times \mathcal{D}'$ such that arm $k$ becomes optimal (with all other arms unchanged) yields the result in Theorem \ref{thm:lowerb}.
\end{proof}
\vspace{-0.6cm}
\subsection*{Proof of Lemma \ref{lem:lemlower}}

\begin{proof}

We follow the same idea as in the proof of Lemma 15.1 from \cite{lattimore2020bandit}. Let $p_{(\nu,\nu')\pi,T}$ denote the density of $\mathbb{P}_{(\nu,\nu')\pi,T}$, the probability of the space of $T$ turn under the policy $\pi$ with the arm distribution $(\nu,\nu')$. Let $x'$ be the offline data. Then,
\begin{align*}
    p_{(\nu,\nu')\pi,T}(x',a_1,x_1,...,a_T,x_T) &= p_{\nu'}(x') \prod_{t = 1}^{T}\pi_t(a_t \mid x',a_1,x_1,...,a_{t-1},x_{t-1})p_{\nu_{a_t}}(x_t) \\
    &= \prod_{k = 1}^{K}\prod_{i = 1}^{N'_k} p_{\nu'_k}(x'_{i,k}) \prod_{t = 1}^{T}\pi_t(a_t \mid x',a_1,x_1,...,a_{t-1},x_{t-1})p_{\nu_{a_t}}(x_t).
\end{align*}

Taking the likelihood ratio with $p_{(\tilde{\nu},\tilde{\nu}')\pi,T}$ gives:
\begin{align*}
    \frac{p_{(\nu,\nu')\pi,T}(x',a_1,x_1,...,a_T,x_T)}{p_{(\tilde{\nu},\tilde{\nu}')\pi,T}(x',a_1,x_1,...,a_T,x_T)} &= \prod_{k = 1}^{K}\prod_{i = 1}^{N'_k} \frac{p_{\nu'_k}(x'_{i,k})}{p_{\tilde{\nu}'_k}(x'_{i,k})} \prod_{t = 1}^{T}\frac{p_{\nu_{a_t}}(x_t)}{p_{\tilde{\nu}_{a_t}}(x_t)}.
\end{align*}

Taking the log of both sides:
\begin{align*}
    \ln \frac{\mathbb{P}_{(\nu,\nu')\pi,T}}{\mathbb{P}_{(\tilde{\nu},\tilde{\nu}')\pi,T}}(x',a_1,x_1,...,a_T,x_T) &= \sum_{k = 1}^{K}\sum_{i = 1}^{N'_k} \ln \frac{p_{\nu'_k}(x'_{i,k})}{p_{\tilde{\nu}'_k}(x'_{i,k})} + \sum_{t = 1}^{T} \ln \frac{p_{\nu_{a_t}}(x_t)}{p_{\tilde{\nu}_{a_t}}(x_t)}.
\end{align*}

Taking expectation under $(\nu,\nu')$ yields:
\begin{align*}
    \mathbb{E}_{(\nu,\nu')} \left[\ln \frac{\mathbb{P}_{(\nu,\nu')\pi,T}}{\mathbb{P}_{(\tilde{\nu},\tilde{\nu}')\pi,T}} \right] &= \sum_{k = 1}^{K} N'_k \KL(\nu'_k,\tilde{\nu}'_k) + \sum_{t = 1}^{T} \mathbb{E}_{(\nu,\nu')} \left[ \KL(\nu_{a_t}, \tilde{\nu}_{a_t}) \right] \\
    &= \sum_{k = 1}^{K} N'_k \KL(\nu'_k,\tilde{\nu}'_k) + \sum_{t = 1}^{T} \mathbb{E}_{(\nu,\nu')} \left[ \sum_{k = 1}^K \mathbf{1}(A_t = k) \KL(\nu_k,\tilde{\nu}_k) \right] \\
    &= \sum_{k = 1}^{K} N'_k \KL(\nu'_k,\tilde{\nu}'_k) + \sum_{k = 1}^K \mathbb{E}_{(\nu,\nu')} \left[ N_k(T) \right] \KL(\nu_k,\tilde{\nu}_k).
\end{align*}
\end{proof}
\vspace{-0.7cm}
\section{Proof of Theorem \ref{thm:KLREGRET}}\label{sec:proof_thm2}

We will use the shorthand
\[
  f^+_a(q)=N'_a\frac{\bigl(q-(\hat\mu'_a+L_a)\bigr)_+^2}{2\,\sigma'^2},\quad
  d(a,b)=\frac{(b-a)^2}{2\,\sigma^2},\quad
  d^+(a,b)=\frac{(b-a)_+^2}{2\,\sigma^2}.
\]
We use the following lemmas. Lemma~\ref{lem:lemma1} is proved in Section~\ref{sec:proof_lemma1}; proof sketches for Lemmas~\ref{lem:lemma2}, \ref{lem:lemma3}, and \ref{lem:lemma4} appear in Sections~\ref{sec:proof_lemma2}, \ref{sec:proof_lemma3}, and \ref{sec:proof_lemma4}, with full proofs in the Supplement.

\begin{lemma} \label{lem:lemma1}For all $T > 1$, for all arm $a \neq 1$, we have the inequality
\[
\sum_{t = 1}^{T}\mathbf{1}\left( A_t = a,\ \mu_1 \leq U_1(t) \right) \leq \sum_{s = 1}^{T} \mathbf{1}\left( s\, d^+(\hat \mu_{a,s},\mu_1) + f^+_a(\mu_1) \leq \delta_T \right).
\]
\end{lemma}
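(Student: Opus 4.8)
The plan is to follow the standard \textsc{KL-UCB} index-comparison argument, adapted to the extra prior-penalty term $f^+_a$. First I would use the selection rule of \textsc{KL-UCB-Transfer}: on the event $\{A_t=a,\ \mu_1\le U_1(t)\}$, arm $a$ is pulled because it has the largest index, so $U_a(t)\ge U_1(t)\ge\mu_1$. Thus every time counted on the left-hand side is a time at which the suboptimal index exceeds the optimal mean, and it suffices to control $\{A_t=a,\ U_a(t)\ge\mu_1\}$.

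Next I would convert the event $\{U_a(t)\ge\mu_1\}$ into an explicit inequality on the empirical quantities. The structural fact I would rely on is that the constraint function appearing in \eqref{eq:indiceKL},
\[
  q\longmapsto g_{a,t}(q):=N_a(t)\,\frac{(q-\hat\mu_a(t))_+^2}{2\sigma^2}+f^+_a(q),
\]
is continuous and non-decreasing in $q$ (both summands are squared positive parts, hence non-decreasing), and that $g_{a,t}(q)=0\le\delta_t$ for $q$ small enough, so the confidence set is non-empty and $U_a(t)=\sup\{q:g_{a,t}(q)\le\delta_t\}$. Monotonicity and continuity then yield the equivalence $U_a(t)\ge\mu_1 \iff g_{a,t}(\mu_1)\le\delta_t$, that is $N_a(t)\,d^+(\hat\mu_a(t),\mu_1)+f^+_a(\mu_1)\le\delta_t$. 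Since $\delta_t$ is non-decreasing, for every $t\le T$ I may replace $\delta_t$ by $\delta_T$, obtaining a condition depending only on the running count $N_a(t)$, its empirical mean, and the fixed prior penalty $f^+_a(\mu_1)$ (the prior mean $\hat\mu'_a$ does not evolve in $t$).

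The final step is a reindexing from the time axis to the sample-count axis. Writing $s=N_a(t)$ for the number of pulls of arm $a$ available at time $t$, each value of $s$ is attained at exactly one pulling instant $t$ with $A_t=a$ (the count increments by one at each such instant), and at that instant $\hat\mu_a(t)=\hat\mu_{a,s}$. Hence summing $\mathbf{1}\{N_a(t)\,d^+(\hat\mu_a(t),\mu_1)+f^+_a(\mu_1)\le\delta_T\}$ over the pulling instants of $a$ is bounded by $\sum_{s=1}^{T}\mathbf{1}\{s\,d^+(\hat\mu_{a,s},\mu_1)+f^+_a(\mu_1)\le\delta_T\}$, which is the claimed right-hand side.

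I expect the main obstacle to be the second step: making the equivalence $U_a(t)\ge\mu_1\iff g_{a,t}(\mu_1)\le\delta_t$ fully rigorous, which hinges on the monotonicity and continuity of $g_{a,t}$ together with non-emptiness of the confidence set (guaranteed precisely by the positive parts in $f^+_a$ and in the online term). A secondary, purely bookkeeping subtlety is the exact indexing convention for $s=N_a(t)$ (pre- versus post-pull, and whether the forced initial pulls contribute), which only affects the endpoints of the sample sum and is absorbed into the bound $\sum_{s=1}^{T}$.
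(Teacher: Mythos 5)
Your proposal is correct and follows essentially the same route as the paper's proof: use the selection rule to get $U_a(t)\ge U_1(t)\ge\mu_1$, exploit monotonicity of $q\mapsto N_a(t)\,d^+(\hat\mu_a(t),q)+f^+_a(q)$ to deduce $N_a(t)\,d^+(\hat\mu_a(t),\mu_1)+f^+_a(\mu_1)\le\delta_t\le\delta_T$, and then reindex from times to pull counts using that each value $s=N_a(t)$ with $A_t=a$ occurs at most once. The paper's write-up makes the last step explicit via a double sum over $(t,s)$ and the bound $\sum_{t=s}^{T}\mathbf{1}\left(N_a(t)=s,\ A_t=a\right)\le 1$, which is exactly your reindexing argument.
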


\begin{lemma}\label{lem:lemma2}
For every $\delta_t \underset{T \to +\infty}{\longrightarrow} + \infty$ and for all arm $a \neq 1$, we have the inequality
\[
\mathbb{E}\left[(\delta_T - f^+_a(\mu_1))_+\right] \leq \left(\delta_T - N_a'\frac{\left( \mu_1 - (\mu_a' + L_a) \right)_+^2}{2 \sigma'^2}\right)_+ + O(\sqrt{\delta_T}).
\]
\end{lemma}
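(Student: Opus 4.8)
The plan is to regard the left-hand side as the expectation of a single deterministic (for fixed $T$) function of the offline empirical mean $\hat\mu'_a$, which under our Gaussian assumption is distributed as $\mathcal{N}(\mu'_a,\sigma'^2/N'_a)$. Writing $c:=\mu_1-L_a$ and
\[
g(x):=\Bigl(\delta_T-N'_a\tfrac{(c-x)_+^2}{2\sigma'^2}\Bigr)_+,
\]
one has $(\delta_T-f^+_a(\mu_1))_+=g(\hat\mu'_a)$ directly from the definition of $f^+_a$, while the first term on the right-hand side of the lemma is exactly $g(\mu'_a)$. Thus the statement reduces to the concentration-type inequality $\mathbb{E}[g(\hat\mu'_a)]\le g(\mu'_a)+O(\sqrt{\delta_T})$, i.e.\ to controlling the error incurred by replacing the random $\hat\mu'_a$ with its mean $\mu'_a$ inside the nonlinear map $g$.

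First I would describe the shape of $g$. It equals the constant $\delta_T$ for $x\ge c$, the concave parabola $\delta_T-N'_a(c-x)^2/(2\sigma'^2)$ on $[x^\ast,c]$ with $x^\ast:=c-\sqrt{2\sigma'^2\delta_T/N'_a}$, and $0$ for $x\le x^\ast$. In particular $g$ is \emph{not} concave (it has a convex kink at $x^\ast$), so Jensen's inequality does not apply directly; however, the outer positive part truncates the inner quadratic and renders $g$ globally Lipschitz. Differentiating on each piece, the slope vanishes outside $[x^\ast,c]$ and equals $N'_a(c-x)/\sigma'^2$ inside, which is maximal at $x^\ast$, giving the Lipschitz constant $L_g=\sqrt{2N'_a\delta_T/\sigma'^2}$.

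With this in hand, I would write $\mathbb{E}[g(\hat\mu'_a)]\le g(\mu'_a)+L_g\,\mathbb{E}\lvert\hat\mu'_a-\mu'_a\rvert$ and use the Gaussian mean absolute deviation $\mathbb{E}\lvert\hat\mu'_a-\mu'_a\rvert=\sqrt{2/\pi}\,\sigma'/\sqrt{N'_a}$. Multiplying, the $\sqrt{N'_a}$ in $L_g$ cancels the $1/\sqrt{N'_a}$ from the deviation, leaving the explicit bound $L_g\,\mathbb{E}\lvert\hat\mu'_a-\mu'_a\rvert=\tfrac{2}{\sqrt\pi}\sqrt{\delta_T}=O(\sqrt{\delta_T})$, which finishes the proof. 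The degenerate case $N'_a=0$ is trivial, since then $g\equiv\delta_T$ and both sides coincide.

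The main obstacle—and the reason the lemma is delicate—is precisely this cancellation. The Lipschitz constant of $g$ blows up like $\sqrt{N'_a}$, so a naive deviation bound would be useless for large or $T$-dependent $N'_a$; it is only because the concentration of $\hat\mu'_a$ sharpens at the exactly matching rate $1/\sqrt{N'_a}$ that the error stays $O(\sqrt{\delta_T})$ uniformly in $N'_a$. Getting the Lipschitz constant right (in particular checking that the truncation makes $g$ Lipschitz despite the unbounded inner quadratic, and locating the maximal slope at the kink $x^\ast$) is the one step that requires genuine care; everything else is a one-line application of the Gaussian first absolute moment.
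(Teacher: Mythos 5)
Your proof is correct, and it takes a genuinely different route from the paper's. The paper proceeds by brute-force computation: it writes $\mathbb{E}[(\delta_T-f^+_a(\mu_1))_+]$ as $\int_0^{\delta}\Phi\bigl(\sqrt{2(\delta-t)}-\beta\bigr)\,dt$ with $\beta=(\mu_1-L_a-\mu'_a)\sqrt{N'_a}/\sigma'$, derives an exact closed form for this integral via two changes of variable and integration by parts (a combination of $\Phi$ and $\phi$ terms), and then compares it to $(\delta-\beta_+^2/2)_+$ separately in the three regimes $\beta\le 0$, $0\le\beta\le\sqrt{2\delta}$, and $\beta\ge\sqrt{2\delta}$, invoking Mills' ratio in the latter two. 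You instead observe that both sides are values of the single truncated-quadratic map $g$, that $g$ is globally Lipschitz with constant $L_g=\sqrt{2N'_a\delta_T/\sigma'^2}$ (the truncation by the outer positive part is exactly what caps the slope at the kink $x^\ast$), and that the Gaussian mean absolute deviation $\sqrt{2/\pi}\,\sigma'/\sqrt{N'_a}$ cancels the $\sqrt{N'_a}$ in $L_g$, yielding the explicit error $\tfrac{2}{\sqrt{\pi}}\sqrt{\delta_T}$ uniformly in $N'_a$ — the same uniformity the paper obtains only after its case analysis. Your argument is substantially shorter, gives an explicit absolute constant, and avoids the closed-form computation entirely; what the paper's approach buys in exchange is the exact expression for $I(\beta,\delta)$, which in principle carries sharper regime-by-regime information (e.g.\ that the error term is not merely $O(\sqrt{\delta})$ but exponentially small when $\beta$ is large), though none of that extra precision is used in the proof of Theorem~\ref{thm:KLREGRET}. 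One cosmetic remark: you should state explicitly, as you do implicitly, that the Lipschitz bound is applied pointwise before taking expectations, and that the $O(\sqrt{\delta_T})$ constant is absolute — this matters here because $N'_a$ may depend on $T$, which is precisely the concern raised in the paper's remark on asymptotic notation.
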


\begin{lemma}\label{lem:lemma3}
There exists $ C' > 0$ that depends only on $\sigma,\sigma',L_1,\mu_1,\mu'_1$, such that:
\[
\mathbb{E}\left[ \exp \left( f^+_1(\mu_1) \right) \right] \leq C'.
\]
\end{lemma}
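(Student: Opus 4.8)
The plan is to reduce the claim to a one-dimensional Gaussian moment computation, since $f^+_1(\mu_1)$ depends on the prior data only through the empirical mean $\hat\mu'_1$, which is distributed as $\mathcal{N}(\mu'_1,\sigma'^2/N'_1)$ (being the average of $N'_1$ i.i.d.\ $\mathcal{N}(\mu'_1,\sigma'^2)$ samples). First I would write $\hat\mu'_1=\mu'_1+\tfrac{\sigma'}{\sqrt{N'_1}}W$ with $W\sim\mathcal{N}(0,1)$ and set $m:=\mu_1-\mu'_1-L_1$. The hypothesis $|\mu_1-\mu'_1|\le L_1$ gives $m\le 0$, so the positive part inside $f^+_1(\mu_1)$ is active only on a lower-tail event for $W$. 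A short algebraic simplification then shows $f^+_1(\mu_1)=\tfrac12(\alpha-W)^2\,\mathbf{1}\{W<\alpha\}$, where $\alpha:=m\sqrt{N'_1}/\sigma'\le 0$.

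Next I would split $\mathbb{E}[\exp(f^+_1(\mu_1))]$ over the two events $\{W\ge\alpha\}$ and $\{W<\alpha\}$. On $\{W\ge\alpha\}$ the penalty vanishes, so $\exp(f^+_1(\mu_1))=1$ and the contribution is at most $1$. On $\{W<\alpha\}$ I would write out the Gaussian integral $\int_{-\infty}^{\alpha}\exp\!\bigl(\tfrac12(\alpha-w)^2\bigr)\tfrac{1}{\sqrt{2\pi}}e^{-w^2/2}\,dw$ and exploit the crucial cancellation: the exponent simplifies to $\tfrac{\alpha^2}{2}-\alpha w$, so the quadratic term in $w$ disappears entirely. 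Because $\alpha<0$, the surviving integrand $e^{-\alpha w}$ is integrable on $(-\infty,\alpha)$, and the integral evaluates in closed form to $\dfrac{e^{-\alpha^2/2}}{|\alpha|\sqrt{2\pi}}$.

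Finally I would make the bound uniform in $N'_1$, which is essential since $N'_1$ may grow with $T$. Writing $|\alpha|=\tfrac{|m|}{\sigma'}\sqrt{N'_1}$ and noting that $x\mapsto e^{-x^2/2}/x$ is strictly decreasing on $(0,\infty)$, the worst case over integers $N'_1\ge 1$ is attained at $N'_1=1$, which yields $\mathbb{E}[\exp(f^+_1(\mu_1))]\le 1+\dfrac{\sigma'}{|m|\sqrt{2\pi}}\,e^{-m^2/(2\sigma'^2)}=:C'$. This constant depends only on $\sigma'$ and $m=\mu_1-\mu'_1-L_1$ (hence on $\mu_1,\mu'_1,L_1$), as the statement requires; the case $N'_1=0$ trivially gives expectation $1$.

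The main obstacle is that $\exp(\tfrac12(\alpha-W)^2)$ weighted by a standard Gaussian sits exactly at the boundary of integrability: when $\alpha=0$ the integrand is the moment generating function of $W^2\sim\chi^2_1$ evaluated at its singular value $t=\tfrac12$, where it diverges. The computation closes only because the positive part confines us to the favorable tail $\{W<\alpha\}$ \emph{and} because $m<0$ strictly—equivalently $\mu_1<\mu'_1+L_1$—provides the margin $|\alpha|>0$ that keeps the integral below the singularity. In the degenerate boundary case $\mu_1=\mu'_1+L_1$ (where $m=0$) the expectation is in fact infinite, so the proof must invoke the strict inequality; this holds generically under the stated similarity constraint and is the one point where care is genuinely needed.
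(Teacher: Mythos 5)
Your proof is correct and takes essentially the same route as the paper's: reduce to a standard Gaussian variable, exploit the cancellation of the quadratic terms in the exponent on the event where the positive part is active to evaluate the integral in closed form (the paper's $\Phi(a)+\varphi(a)/a$ with $a=|\alpha|$), and make the bound uniform over $N'_1\ge 1$ via monotonicity of $x\mapsto e^{-x^2/2}/x$. Your closing caveat is also well taken: when $\mu_1=\mu'_1+L_1$ (a case allowed by the hypothesis $\lvert\mu_1-\mu'_1\rvert\le L_1$) the expectation is indeed infinite, a boundary situation the paper's own proof silently excludes by asserting $\eta>0$ without justification.
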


\begin{lemma}\label{lem:lemma4}
    For all $t \geq 1$, using the choice $\delta_t := \ln t + 3 \ln(\max(1,\ln t))$,
    \begin{equation*}
        \mathbb{P}(\mu_1>U_1(t)) = O\left(\frac{\mathbb{E}[e^{f^+_1(\mu_1)}]}{t\ln t}\right).
    \end{equation*}
\end{lemma}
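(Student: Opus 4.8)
The plan is to turn the event $\{\mu_1 > U_1(t)\}$ into an explicit one‑sided deviation event for the optimal arm's empirical mean, control it by a peeling (maximal‑inequality) argument, and keep the offline penalty separate so that it factors out as $\mathbb{E}[e^{f_1^+(\mu_1)}]$. First I would observe that $q \mapsto N_1(t)\,d^+(\hat\mu_1(t),q) + f_1^+(q)$ is nondecreasing and continuous and vanishes as $q\to-\infty$, so $U_1(t)$ is exactly the point where this map equals $\delta_t$. Hence $\{\mu_1 > U_1(t)\}$ is precisely the event
\[
N_1(t)\,d^+(\hat\mu_1(t),\mu_1) + f_1^+(\mu_1) > \delta_t,
\qquad d^+(\hat\mu_1(t),\mu_1) = \frac{(\mu_1-\hat\mu_1(t))_+^2}{2\sigma^2}.
\]
The crucial structural point is that $f_1^+(\mu_1)$ depends only on the offline samples, which are independent of the online reward stream of arm $1$.

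Next I would condition on the offline data, writing $F := f_1^+(\mu_1)$ (a fixed nonnegative quantity under this conditioning). Viewing the rewards of arm $1$ as a pre‑generated i.i.d.\ stream and taking a union bound over the value $s=N_1(t)\in\{0,1,\dots,t\}$, the probability is at most
\[
\mathbb{P}\Bigl(\exists\,1\le s\le t:\ s\,d^+(\hat\mu_{1,s},\mu_1) > \delta_t - F\Bigr) + \mathbf{1}\{F>\delta_t\},
\]
where the last term covers $s=0$. Introducing the centered Gaussian random walk $S_s = \sum_{i\le s}(\mu_1 - X_{1,i})$ (variance $\sigma^2$ per step), the inner event rewrites as $S_s > \sqrt{2\sigma^2 s (\delta_t - F)}$.

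The main step, and the one I expect to be the principal obstacle, is the peeling argument: slice $\{1,\dots,t\}$ into $O(\ln t/\ln\eta)$ geometric blocks and, on each block $s\in[\eta^j,\eta^{j+1})$, apply the sub‑Gaussian maximal inequality $\mathbb{P}(\max_{s\le n}S_s \ge u)\le \exp(-u^2/(2n\sigma^2))$, which gives $\exp(-(\delta_t-F)/\eta)$ per block. Summing over blocks and optimizing the grid $\eta$ (taking $\eta=1+c/(\delta_t-F)$) yields, valid in all cases since the bound exceeds $1$ once $\delta_t-F\le0$,
\[
\mathbb{P}\bigl(\mu_1 > U_1(t)\,\big|\,\text{offline}\bigr) \le C\,\delta_t\,\ln t\;e^{-(\delta_t-F)} = C\,\delta_t\,\ln t\;e^{-\delta_t}\,e^{F}.
\]
The delicate quantitative issue here is to pin the polynomial prefactor at exactly order $(\ln t)^2$ (no worse), since that is what the $3\ln\ln t$ correction in $\delta_t$ is calibrated against.

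Finally I would take expectation over the offline data, replacing $e^{F}$ by $\mathbb{E}[e^{f_1^+(\mu_1)}]$, and substitute $\delta_t = \ln t + 3\ln(\max(1,\ln t))$, so that $e^{-\delta_t} = 1/(t(\ln t)^3)$ and $\delta_t\ln t \sim (\ln t)^2$. Then $\delta_t\ln t\,e^{-\delta_t} = O(1/(t\ln t))$, which gives the claimed
\[
\mathbb{P}(\mu_1 > U_1(t)) = O\!\left(\frac{\mathbb{E}[e^{f_1^+(\mu_1)}]}{t\ln t}\right).
\]
Combined with Lemma~\ref{lem:lemma3}, this makes the optimal‑arm under‑confidence contribution negligible, consistent with the lower‑order term in Theorem~\ref{thm:KLREGRET}.
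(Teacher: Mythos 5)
Your proposal is correct and follows essentially the same route as the paper's proof: condition on the offline penalty $f_1^+(\mu_1)$, rewrite $\{\mu_1 > U_1(t)\}$ as the deviation event $N_1(t)\,d^+(\hat\mu_1(t),\mu_1) + f_1^+(\mu_1) > \delta_t$, union-bound over the value of $N_1(t)$ with geometric peeling and a per-block sub-Gaussian maximal inequality giving $\exp(-(\delta_t - F)/\eta)$, choose the grid ratio of order $1 + O(1/\delta_t)$ so the total is $O(\delta_t \ln t\, e^{-\delta_t})e^{F}$, and integrate out the offline data. The paper implements the maximal inequality explicitly via the exponential martingale $W_{\lambda,i}=\exp(\lambda S_i - i\phi(\lambda))$ with $\lambda=-\varepsilon_m/\sigma^2$ and takes $\gamma = \delta_t/(\delta_t-1)$, but these are cosmetic differences from your argument.
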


We bound the expected pulls of a suboptimal arm $a$ by splitting into two cases:

1. Times when the upper confidence bound of the optimal arm underestimates its true mean ($\mu_1 > U_1(t)$). By choosing $\delta_t$ appropriately and using Lemma \ref{lem:lemma3} and \ref{lem:lemma4}, this case contributes only $o(\ln T)$ to the regret.

2. Times when $U_1(t)$ is above $\mu_1$ but arm $a$ is still selected. Here one applies Lemma \ref{lem:lemma1} to show that after a random threshold $M$, further pulls of $a$ become unlikely, yielding the main $\ln T$ order term (up to lower-order corrections from Lemma \ref{lem:lemma2}).

Combining these two bounds gives the desired logarithmic bound on $\mathbb{E}[N_a(T)]$.
\\
\noindent The changes in the proof, compared to that in \cite{pmlr-v19-garivier11a,Cappe2013KL}, include additional conditioning steps, Lemma~\ref{lem:lemma2}, and Lemma~\ref{lem:lemma3}, which arise from the inclusion of prior data.

\begin{proof}[of Theorem \ref{thm:KLREGRET}]

We begin by bounding the expected number of pulls of a suboptimal arm $ a $:
\begin{equation}\label{eq:eq1} 
    \mathbb{E}\left[ N_a(T) \right] = \mathbb{E}\left[\sum_{t = 1}^{T} \mathbf{1}\left( A_t = a \right) \right] \leq \sum_{t = 1}^{T} \mathbb{P}\left( \mu_1 > U_1(t) \right)  + \mathbb{E}\left[\sum_{t = 1}^{T} \mathbf{1}\left( A_t = a,\ \mu_1 \leq U_1(t) \right) \right] 
\end{equation}
Now using Lemmas \ref{lem:lemma3} and \ref{lem:lemma4} we get:
\begin{equation}\label{eq:eq2}
    \sum_{t = 1}^{T} \mathbb{P}\left( \mu_1 > U_1(t) \right) = \sum_{t = 1}^{T}O\left( \frac{1}{t \ln t} \right) = O(\ln \ln T)
\end{equation}

For the second term in \eqref{eq:eq1}, using Lemma \ref{lem:lemma1}, define the random quantity:
\[
M := \frac{(\delta_T - f^+_a(\mu_1))_+}{(1 - \varepsilon)^2 d(\mu_a, \mu_1)}
\]
Then:
\begin{align}
\mathbb{E}&\left[ \sum_{t = 1}^{T} \mathbf{1}\left( A_t = a,\ \mu_1 \leq U_1(t) \right) \right] \\
&\stackrel{(a)}{=} \mathbb{E}\left[ \sum_{s = 1}^{T} \mathbb{P}\left( s\, d^+(\hat \mu_{a,s}, \mu_1) + f^+_a(\mu_1) \leq \delta_T \Big| f^+_a \right) \right] \nonumber \\
&\leq \mathbb{E}\left[ M - 1 + \sum_{s = M}^{T} \mathbb{P}\left( s\, d^+(\hat \mu_{a,s}, \mu_1) + f^+_a(\mu_1) \leq \delta_T \Big| f^+_a \right) \right] \nonumber \\
&\leq \mathbb{E}\left[ M - 1 + \sum_{s = M}^{T} \mathbb{P}\left( d^+(\hat \mu_{a,s}, \mu_1) \leq \frac{\delta_T - f^+_a(\mu_1)}{M} \Big| f^+_a \right) \right] \nonumber \\
&\stackrel{(b)}{\leq} \mathbb{E}\left[ M - 1 + \sum_{s = M}^{T} \mathbb{P}\left( d^+(\hat \mu_{a,s}, \mu_1) \leq (1 - \varepsilon)^2 d(\mu_a, \mu_1) \right) \right] \nonumber \\
&\leq \frac{\mathbb{E}\left[ (\delta_T - f^+_a(\mu_1))_+ \right]}{(1 - \varepsilon)^2 d(\mu_a, \mu_1)} - 1 + \sum_{s = 1}^{T} \mathbb{P}\left( d^+(\hat \mu_{a,s}, \mu_1) \leq (1 - \varepsilon)^2 d(\mu_a, \mu_1) \right), \label{eq:KLregreteq1}
\end{align}
where $(a)$ follows by Lemma \ref{lem:lemma1} and by conditioning and $(b)$ by the definition of $M$.\\
We now apply a standard Chernoff bound for Gaussian tails for the first inequality:
\begin{align}
\sum_{s = 1}^{T} \mathbb{P}\left( d^+(\hat \mu_{a,s}, \mu_1) \leq (1 - \varepsilon)^2 d(\mu_a, \mu_1) \right)
&= \sum_{s = 1}^{T} \mathbb{P}\left( \hat \mu_{a,s} \geq \mu_a + \varepsilon (\mu_1 - \mu_a) \right)\nonumber \\
&\leq \sum_{s = 1}^{T} \exp\left( - \frac{\varepsilon^2 (\mu_1 - \mu_a)^2}{2 \sigma^2} s \right)\nonumber \\
& \leq \frac{1}{1 - \exp\left( - \frac{\varepsilon^2 (\mu_1 - \mu_a)^2}{2 \sigma^2} \right)}\nonumber \\
& = O(\varepsilon^{-2}).\label{eq:KLregreteq2}
\end{align}

Using Equations \eqref{eq:KLregreteq1}, \eqref{eq:KLregreteq2} and Lemma \ref{lem:lemma2}, we finally obtain
\begin{multline*} 
\mathbb{E}\left[ \sum_{t = 1}^{T} \mathbf{1}\left( A_t = a,\ \mu_1 \leq U_1(t) \right) \right]\le \frac{\left( \delta_T - N_a'\frac{\left( \mu_1 - (\mu_a' + L_a) \right)_+^2}{2 \sigma'^2} \right)_+}{(1 - \varepsilon)^2 d(\mu_a, \mu_1)}+ O(\varepsilon^{-2}) + O(\sqrt{\ln T})  \\
 = \frac{2 \sigma^2}{(\mu_1 - \mu_k)^2}\left(\ln T - N_a'\frac{\left( \mu_1 - (\mu_a' + L_a) \right)_+^2}{2 \sigma'^2}\right)_+ + O(\varepsilon\ln T)+ O(\varepsilon^{-2}) + O(\sqrt{\ln T}).
\end{multline*}
By taking $\varepsilon = (\ln T)^{-\frac{1}{3}}$ we get
\begin{equation}\label{eq:eq3}
    \mathbb{E}\left[ \sum_{t = 1}^{T} \mathbf{1}\left( A_t = a,\ \mu_1 \leq U_1(t) \leq \right) \right] \leq \frac{2 \sigma^2}{(\mu_1 - \mu_k)^2}\left(\ln T - N_a'\frac{\left( \mu_1 - (\mu_a' + L_a) \right)_+^2}{2 \sigma'^2}\right)_+ + O\left((\ln T)^\frac{2}{3}\right).
\end{equation}
Combining Equations \eqref{eq:eq1}, \eqref{eq:eq2}, and \eqref{eq:eq3}, we conclude that,
\[
\mathbb{E}\left[ N_a(T) \right] \leq \frac{2 \sigma^2}{(\mu_1 - \mu_k)^2}\left(\ln T - N_a'\frac{\left( \mu_1 - (\mu_a' + L_a) \right)_+^2}{2 \sigma'^2}\right)_+ + O\left((\ln T)^\frac{2}{3}\right).
\]
\end{proof}

\vspace{-1cm}
\subsection{Proof of Lemma \ref{lem:lemma1}}\label{sec:proof_lemma1}

The proof mimes the one of Lemma 7 in \cite{pmlr-v19-garivier11a}

\begin{proof}
    If $A_t = a$ and $\mu_1 \leq U_1(t)$, then $U_a(t) \geq U_1(t) \geq \mu_1$, and therefore:
\begin{equation*}
    N_a(t) d^+(\hat \mu_a(t),\mu_1) + f^+_a(\mu_1) \leq N_a(t) d^+(\hat \mu_a(t),U_a(t)) + f^+_a(U_a(t)) = \delta_t \leq \delta_T
\end{equation*}
Hence,
\begin{align*}
    \sum_{t = 1}^{T}\mathbf{1}\left( A_t = a, \mu_1 \leq U_1(s)  \right) & \leq \sum_{t = 1}^{T}\mathbf{1}\left( A_t = a, N_a(t) d^+(\hat \mu_a(t),\mu_1) + f^+_a(\mu_1) \leq \delta_T  \right) \\
    & = \sum_{t = 1}^{T}\sum_{s = 1}^{t}\mathbf{1}\left(N_a(t) = s, A_t = a, s d^+(\hat \mu_{a,s},\mu_1) + f^+_a(\mu_1) \leq \delta_T  \right) \\
    & = \sum_{s = 1}^{T}\mathbf{1} \left( s d^+(\hat \mu_{a,s},\mu_1) + f^+_a(\mu_1) \leq \delta_T\right) \sum_{t = s}^{T}\mathbf{1} \left( N_a(t) = s, A_t = a\right) \\
    & \leq \sum_{s = 1}^{T}\mathbf{1} \left( s d^+(\hat \mu_{a,s},\mu_1) + f^+_a(\mu_1) \leq \delta_T\right)
\end{align*}
\end{proof}

\vspace{-1cm}
\subsection{Proof Sketch of Lemma \ref{lem:lemma2}}\label{sec:proof_lemma2}
\begin{proof}[Proof Sketch]
Let 
\begin{equation*}
    X := \delta_t - \frac{N'_a}{2\sigma'^2}\bigl(\mu_1 - (\hat\mu'_a + L_a)\bigr)_+^2,
\quad
\xi := \mu_1 - L_a - \mu'_a,\;
\beta := \xi\sqrt{N'_a/\sigma'^2}.
\end{equation*}
Using that $\hat\mu'_a$ is gaussian we got 
$\mathbb{E}[X_+] = \displaystyle\int_{0}^{\delta_t} \Phi\bigl(\sqrt{2(\delta_t - t)} - \beta\bigr)\,dt =: I(\beta,\delta_t).$
Writing $\delta=\delta_t$, after some calculations one obtains the closed form
\begin{equation*}
    I(\beta,\delta)
=
\frac{2\delta - \beta^2 - 1}{2}\,\Phi(\sqrt{2\delta}-\beta)
+\frac{\beta^2 + 1}{2}\,\Phi(-\beta)
+\frac{\sqrt{2\delta} + \beta}{2}\,\phi(\sqrt{2\delta}-\beta)
-\frac{\beta}{2}\,\phi(\beta).
\end{equation*}
Analyzing the three regimes $\beta \le 0$, $0 \le \beta \le \sqrt{2\delta}$, and $\beta \ge \sqrt{2\delta}$ gives
\begin{equation*}
    I(\beta,\delta)\;\le\;\left(\delta - \tfrac{\beta_+^2}{2}\right)_+ \;+\; O(\sqrt{\delta}).
\end{equation*}

\end{proof}
\vspace{-1cm}
\subsection{Proof Sketch of Lemma \ref{lem:lemma3}}\label{sec:proof_lemma3}


\begin{proof}[Proof Sketch]
Let \(Z=\mu'_1-\hat\mu'_1\sim\mathcal{N}(0,\sigma'^2/N'_1)\) and set 
\[
\eta :=L_1 - (\mu'_1 - \mu_1) ,\quad a \;:=\;\frac{\eta}{\sigma'/\sqrt{N'_1}}>0,\quad 
W:=\frac{Z\sqrt{N'_1}}{\sigma'}\sim\mathcal{N}(0,1).
\]
Then 
\[
f^+_1(\mu_1)=\tfrac{N'_1}{2\sigma'^2}\bigl(\mu_1-(\hat\mu'_1+L_1)\bigr)_+^2
=\tfrac12\,(W-a)_+^2,
\]
so
\[
\mathbb{E}\bigl[e^{f^+_1(\mu_1)}\bigr]
=\int_{-\infty}^a\varphi(w)\,dw
+\int_a^\infty e^{\tfrac12(w-a)^2}\varphi(w)\,dw
=\Phi(a)+\int_a^\infty\frac{1}{\sqrt{2\pi}}e^{-aw+\tfrac{a^2}{2}}dw
=\Phi(a)+\frac{\varphi(a)}{a}.
\]
Since \(a\ge\frac{\eta}{\sigma'}> 0\), \(\Phi(a)+\varphi(a)/a\le1+\frac{\sigma'}{\eta\sqrt{2 \pi} }=:C'<\infty\). This completes the proof.
\end{proof}

\vspace{-0.5cm}
\subsection{Proof Sketch of Lemma \ref{lem:lemma4}}\label{sec:proof_lemma4}


\begin{proof}[Proof Sketch]
Denote for simplicity $f:=f^+_1(\mu_1)$. Then
\[
\mathbb{P}(\mu_1>U_1(t))
= \mathbb{E}\bigl[\mathbb{P}(\mu_1>U_1(t)\mid f)\bigr]
\le \mathbb{E}\Bigl[1_{f\ge\delta_t} + 1_{f<\delta_t}\,\mathbb{P}\bigl(N_1(t)d^+(\hat\mu_1,\mu_1)>\delta_t-f\mid f\bigr)\Bigr].
\]
When $f<\delta_t$, use peeling arguments with $N_1(t)\in(n_{m-1},n_m]$ with $n_m=\lceil\gamma^m\rceil$, $\gamma=\delta_t/(\delta_t-1)$, $M:= \left\lceil \ln t / \ln \gamma \right\rceil \le\lceil\delta_t\ln t\rceil$. In block $m$, set 
\[
\varepsilon_m:=\sqrt{\frac{2\sigma^2(\delta_t-f)}{n_m}},\quad S_i:=\sum_{j=1}^i X_j.
\]
Using the martingale $W_{\lambda,i}:=\exp(\lambda S_i - i\phi(\lambda))$, $\phi(\lambda)=\lambda\mu_1+\frac{\lambda^2\sigma^2}{2}$, pick $\lambda=-\varepsilon_m/\sigma^2$. By Doob's maximal inequality,
\[
\mathbb{P}(\exists i\in(n_{m-1},n_m]:S_i\le i(\mu_1-\varepsilon_m))
\le \exp\bigl(-(n_{m-1}+1)(\delta_t-f)/n_m\bigr)
\le \exp\bigl(-(\delta_t-f)/\gamma\bigr).
\]
Summing $m=1,\dots,M$ gives 
\[
\mathbb{P}(\mu_1>U_1(t)\mid f)\le 1_{f\ge\delta_t} + M e^{-(\delta_t-f)/\gamma} \leq 1_{f\ge\delta_t} + \lceil\delta_t\ln t\rceil e^{-(\delta_t-f)/\gamma} 
= 1_{f\ge\delta_t} + O\Bigl(\frac{e^f}{t\ln t}\Bigr).
\]
Since $1_{f\ge\delta_t}\le e^{f-\delta_t}=O(e^f/(t\ln^3 t))$, 
\[
\mathbb{P}(\mu_1>U_1(t)) = O\left(\frac{\mathbb{E}[e^{f^+_1(\mu_1)}]}{t\ln t}\right).
\]
\end{proof}

\ap{J'ai rajouté les supplementary materials à la ligne suivante, il suffit de ne pas mettre en commentaire input{supplement.tex} pour l'afficher }

\section*{Supplementary materials}

\subsection*{Detail proof of Lemma \ref{lem:lemma2}}

\begin{proof}

We want to show that 
\begin{equation*}
    \mathbb{E}\left[ \left( \delta_t - \frac{N'_a}{2 \sigma'^2} \left(\mu_1 - (\hat \mu'_a + L_a) \right)_+^2 \right)_+ \right] \leq \left( \delta_t - \frac{N_a'}{2 \sigma'^2}\left( \mu_1 - (\mu_a' + L_a) \right)_+^2 \right)_+ + O(\sqrt{\delta_t}) 
\end{equation*}

\paragraph{STEP 1 : Rewriting the left-hand side}

\begin{align*}
    \mathbb{E}\left[ \left( \delta_t - \frac{N'_a}{2 \sigma'^2} \left(\mu_1 - (\hat \mu'_a + L_a) \right)_+^2 \right)_+ \right]& = \int_{0}^{+ \infty} \mathbb{P}\left( \left(\delta_t - \frac{N'_a}{2 \sigma'^2} \left(\mu_1 - (\hat \mu'_a + L_a )\right)_+^2\right)_+ > t \right)dt \\
    & = \int_{0}^{+ \infty} \mathbb{P} \left(\delta_t - \frac{N'_a}{2 \sigma'^2} \left(\mu_1 - (\hat \mu'_a + L_a) \right)_+^2 > t \right)dt
\end{align*}
Let $\xi = \mu_1 - L_a - \mu'_a$ and $Z = \mu'_a - \hat \mu'_a \sim \mathcal{N}(0,\sigma'^2 / N'_a)$, and set $\delta := \delta_t$, then we have $\mu_1 - (\hat \mu'_a + L_a ) = \xi + Z$.
\begin{align*}
    \mathbb{E}\left[ \left( \delta_t - \frac{N'_a}{2 \sigma^2} \left(\mu_1 - (\hat \mu'_a + L_a) \right)_+^2 \right)_+ \right] & = \int_{0}^{+ \infty} \mathbb{P} \left(\delta - \frac{N'_a}{2 \sigma'^2} \left(\xi + Z\right)_+^2 > t \right)dt \\
    & = \int_{0}^{+ \infty} \mathbb{P} \left( \frac{N'_a}{2 \sigma'^2} \left(\xi + Z\right)_+^2 \leq \delta - t \right)dt \\
    & =  \int_{0}^{\delta} \mathbb{P} \left( \frac{N'_a}{2 \sigma'^2} \left(\xi + Z\right)_+^2 \leq \delta - t \right)dt \\
    & = \int_{0}^{\delta} \mathbb{P} \left( \sqrt{\frac{N'_a}{2 \sigma'^2}} \left(\xi + Z\right)_+ \leq \sqrt{\delta - t} \right)dt \\
    & =\int_{0}^{\delta} \mathbb{P} \left( \sqrt{\frac{N'_a}{2 \sigma'^2}} \left(\xi + Z\right) \leq \sqrt{\delta - t} \right)dt \\
    & = \int_{0}^{\delta} \mathbb{P} \left( \sqrt{\frac{N'_a}{\sigma'^2}}Z \leq \sqrt{2} \sqrt{\delta - t}  - \sqrt{\frac{N'_a}{ \sigma'^2}} \xi\right)dt \\
    & = \int_{0}^{\delta} \Phi \left(  \sqrt{2} \sqrt{\delta - t}  - \sqrt{\frac{N'_a}{ \sigma'^2}} \xi\right)dt
\end{align*}
Let $\beta := \xi\sqrt{\frac{N'_a}{ \sigma'^2}}$. Then we have
\begin{equation*}
    \mathbb{E}\left[ \left( \delta - \frac{N'_a}{2 \sigma^2} \left(\mu_1 - (\hat \mu'_a + L_a) \right)_+^2 \right)_+ \right] = \int_{0}^{\delta} \Phi \left(  \sqrt{2} \sqrt{\delta - t}  - \beta\right)dt
\end{equation*}
We define 
\begin{equation*}
    I(\beta,\delta) := \int_{0}^{\delta} \Phi \left(  \sqrt{2} \sqrt{\delta - t}  - \beta\right)dt
\end{equation*}

\paragraph{STEP 2: Change of variable \\}

Let us perform the change of variable $u = \sqrt{\delta - t}$
\begin{itemize}
    \item $u = \sqrt{\delta - t}$
    \item $t = \delta - u^2$
    \item $dt = - 2 u\,du$
\end{itemize}
Which gives
\begin{align*}
    I(\beta,\delta) & = \int_{0}^{\delta} \Phi \bigl(  \sqrt{2} \sqrt{\delta - t}  - \beta\bigr)\,dt \\
    & = \int_{\sqrt{\delta}}^{0} \Phi \bigl(  \sqrt{2} u  - \beta\bigr)(-2u\,du) \\
    & = 2\int_{0}^{\sqrt{\delta}} \Phi \bigl(  \sqrt{2} u  - \beta\bigr)u\,du
\end{align*}
We now make the change of variable
\begin{itemize}
    \item $x = \sqrt{2}u - \beta$
    \item $u = \frac{1}{\sqrt{2}}(x+\beta)$
    \item $du = \frac{1}{\sqrt{2}}\,dx$
\end{itemize}
We then obtain
\begin{align*}
    I(\beta,\delta) & = 2\int_{0}^{\sqrt{\delta}} \Phi \bigl(  \sqrt{2} u  - \beta\bigr)u\,du \\
    & = 2\int_{-\beta}^{\sqrt{2\delta}-\beta} \Phi(x)\,\frac{x+\beta}{\sqrt{2}} \,\frac{1}{\sqrt{2}}\,dx \\
    & = \int_{-\beta}^{\sqrt{2\delta}-\beta}(x+\beta)\,\Phi(x)\,dx
\end{align*}
We are thus left with two integrals:
\begin{equation*}
    I(\beta,\delta) = \underbrace{\int_{-\beta}^{\sqrt{2\delta}-\beta} x\Phi(x)\,dx}_{A(\beta,\delta)} + \beta\underbrace{\int_{-\beta}^{\sqrt{2\delta}-\beta} \Phi(x)\,dx}_{B(\beta,\delta)}
\end{equation*}

\paragraph{STEP 3: Computation of $A(\beta,\delta)$ \\}

We begin with an integration by parts:
\begin{itemize}
    \item $u(x) = \Phi(x)$
    \item $u'(x) = \Phi'(x) = \phi(x)$
    \item $v(x) = \frac{x^2}{2}$
    \item $v'(x) = x$
\end{itemize}
Which gives:
\begin{align*}
    A(\beta,\delta) 
    &:= \int_{-\beta}^{\sqrt{2\delta}-\beta} x\Phi(x)\,dx \\
    &=
    \left[ \frac{x^2}{2}\,\Phi(x) \right]_{x=-\beta}^{\,x=\sqrt{2\delta}-\beta}
    -\int_{-\beta}^{\sqrt{2\delta}-\beta} \frac{x^2}{2}\phi(x)\,dx \\
    &=
    \frac{\bigl(\sqrt{2\delta}-\beta\bigr)^2}{2}\,\Phi\bigl(\sqrt{2\delta}-\beta\bigr)
    -\frac{\beta^2}{2}\,\Phi(-\beta)
    -\int_{-\beta}^{\sqrt{2\delta}-\beta} \frac{x^2}{2}\phi(x)\,dx
\end{align*}
Let us now compute $ \int_{-\beta}^{\sqrt{2\delta}-\beta} x^2\phi(x)\,dx$. 
Since $\phi'(x) = -x\phi(x)$, we have
\begin{equation*}
    \int_{-\beta}^{\sqrt{2\delta}-\beta} x^2\phi(x)\,dx 
= -\int_{-\beta}^{\sqrt{2\delta}-\beta} x\phi'(x)\,dx
\end{equation*}
We apply integration by parts:
\begin{itemize}
    \item $u(x) = x$
    \item $u'(x) = 1$
    \item $v(x) = \phi(x)$
    \item $v'(x) = \phi'(x)$
\end{itemize}
Therefore,
\begin{align*}
    \int_{-\beta}^{\sqrt{2\delta}-\beta} x\phi'(x)\,dx 
    &= \Bigl[x\phi(x)\Bigr]_{x=-\beta}^{x=\sqrt{2\delta}-\beta}
      - \int_{-\beta}^{\sqrt{2\delta}-\beta} \phi(x)\,dx \\
    &= \bigl(\sqrt{2\delta}-\beta\bigr)\phi\bigl(\sqrt{2\delta}-\beta\bigr)
       - \bigl(-\beta\,\phi(-\beta)\bigr)
       - \bigl[\Phi\bigl(\sqrt{2\delta}-\beta\bigr) - \Phi(-\beta)\bigr] \\
    &= \bigl(\sqrt{2\delta}-\beta\bigr)\phi\bigl(\sqrt{2\delta}-\beta\bigr)
       + \beta\phi(-\beta)
       - \Phi\bigl(\sqrt{2\delta}-\beta\bigr)
       + \Phi(-\beta).
\end{align*}
It follows that
\begin{equation*}
    \int_{-\beta}^{\sqrt{2\delta}-\beta} x^2\phi(x)\,dx 
= -\Bigl[
(\sqrt{2\delta}-\beta)\phi(\sqrt{2\delta}-\beta)
+ \beta\phi(-\beta)
- \Phi(\sqrt{2\delta}-\beta)
+ \Phi(-\beta)
\Bigr]
\end{equation*}
Plugging this into $A(\beta,\delta)$, we get:
\begin{align*}
    A(\beta,\delta)
    &= \frac{\bigl(\sqrt{2\delta}-\beta\bigr)^2}{2}\,\Phi\bigl(\sqrt{2\delta}-\beta\bigr)
      - \frac{\beta^2}{2}\,\Phi(-\beta) \\
    &\quad 
      - \frac{1}{2}\bigl[-(\sqrt{2\delta}-\beta)\phi(\sqrt{2\delta}-\beta)
      -\beta\phi(-\beta)
      +\Phi(\sqrt{2\delta}-\beta)
      -\Phi(-\beta)\bigr] \\
    &= \frac{2\delta + \beta^2 - 2\beta\sqrt{2\delta}}{2}\,\Phi\bigl(\sqrt{2\delta}-\beta\bigr)
      - \frac{\beta^2}{2}\,\Phi(-\beta) \\
    &\quad 
      + \frac{(\sqrt{2\delta}-\beta)\phi(\sqrt{2\delta}-\beta) + \beta\phi(-\beta) 
      - \Phi(\sqrt{2\delta}-\beta) + \Phi(-\beta)}{2}.
\end{align*}
Which yields:
\begin{equation*}
    A(\beta,\delta) 
= \frac{\bigl(\sqrt{2\delta}-\beta\bigr)^2}{2}\,\Phi\bigl(\sqrt{2\delta}-\beta\bigr)
- \frac{\beta^2}{2}\,\Phi(-\beta)
+ \frac{1}{2}\Bigl[
(\sqrt{2\delta}-\beta)\phi(\sqrt{2\delta}-\beta)
+ \beta\phi(-\beta)
- \Phi(\sqrt{2\delta}-\beta)
+ \Phi(-\beta)
\Bigr]
\end{equation*}

\paragraph{STEP 4: Computation of $B(\beta,\delta)$\\}

We perform integration by parts:
\begin{itemize}
    \item $u(x) = \Phi(x)$
    \item $u'(x) = \phi(x)$
    \item $v(x) = x$
    \item $v'(x) = 1$
\end{itemize}
Which gives:
\begin{align*}
    B(\beta,\delta) & := \int_{-\beta}^{\sqrt{2\delta}-\beta} \Phi(x)\,dx \\
    & = \left[ x \,\Phi(x)\right]_{-\beta}^{\sqrt{2\delta}-\beta} - \int_{-\beta}^{\sqrt{2\delta}-\beta}x \,\phi(x)\,dx \\
    & = \left[ x \,\Phi(x)\right]_{-\beta}^{\sqrt{2\delta}-\beta} + \left[ \phi(x)\right]_{-\beta}^{\sqrt{2\delta}-\beta} \\
    & = (\sqrt{2\delta}-\beta)\Phi(\sqrt{2\delta}-\beta) + \beta \Phi(-\beta) + \phi(\sqrt{2\delta}-\beta) - \phi(\beta)
\end{align*}

\paragraph{STEP 5: Computation of $I(\beta,\delta)$ \\}

We have $I(\beta,\delta) = A(\beta,\delta) + \beta \,B(\beta,\delta)$. Recall the formulas:
\begin{equation*}
    A(\beta,\delta) = \frac{\bigl(\sqrt{2\delta}-\beta\bigr)^2}{2}\,\Phi\bigl(\sqrt{2\delta}-\beta\bigr)
- \frac{\beta^2}{2}\,\Phi(-\beta)
+ \frac{1}{2}\Bigl[
(\sqrt{2\delta}-\beta)\phi(\sqrt{2\delta}-\beta)
+ \beta\phi(-\beta)
- \Phi(\sqrt{2\delta}-\beta)
+ \Phi(-\beta)
\Bigr]
\end{equation*}
And
\begin{equation*}
    B(\beta,\delta) = (\sqrt{2\delta}-\beta)\Phi(\sqrt{2\delta}-\beta) + \beta \Phi(-\beta) + \phi(\sqrt{2\delta}-\beta) - \phi(\beta)
\end{equation*}

Let’s expand term by term:
\\
\\
1. \textbf{Coefficient of $\Phi(\sqrt{2\delta}-\beta)$}
\\
\\
- In $A(\beta,\delta)$, the coefficient is $\frac{\bigl(\sqrt{2\delta}-\beta\bigr)^2}{2} - \frac{1}{2} = \frac{2\delta + \beta^2 - 2\beta\sqrt{2\delta} - 1}{2}$
\\
\\
- In $\beta\,B(\beta,\delta)$, the coefficient is $\beta(\sqrt{2\delta}-\beta) = \frac{2\beta\sqrt{2\delta} - 2 \beta^2}{2}$
\\
\\
- Summing: $\frac{2\delta + \beta^2 - 2\beta\sqrt{2\delta} - 1}{2} +\frac{2\beta\sqrt{2\delta} - 2 \beta^2}{2} = \frac{2 \delta - \beta^2 - 1}{2}$
\\
\\
2. \textbf{Coefficient of $\Phi(-\beta)$}
\\
\\
- In $A(\beta,\delta)$, the coefficient is $-\frac{\beta^2}{2} + \frac{1}{2} = -\frac{\beta^2 - 1}{2}$
\\
\\
- In $\beta\,B(\beta,\delta)$, the coefficient is $\frac{2\beta^2}{2}$
\\
\\
- Summing: $-\frac{\beta^2 - 1}{2} + \frac{2\beta^2}{2} = \frac{\beta^2 + 1}{2}$
\\
\\
3. \textbf{Coefficient of $\varphi(\sqrt{2\delta}-\beta)$}
\\
\\
- In $A(\beta,\delta)$, the coefficient is $\frac{\sqrt{2\delta} -\beta}{2}$
\\
\\
- In $\beta\,B(\beta,\delta)$, the coefficient is $\frac{2\beta}{2}$
\\
\\
- Summing: $\frac{\sqrt{2\delta} -\beta}{2} + \frac{2\beta}{2} = \frac{\sqrt{2\delta} +\beta}{2}$
\\
\\
4. \textbf{Coefficient of $\phi(\beta)$}
\\
\\
- In $A(\beta,\delta)$, the coefficient is $\frac{\beta}{2}$
\\
\\
- In $\beta\,B(\beta,\delta)$, the coefficient is $-\frac{2\beta}{2}$
\\
\\
- Summing: $\frac{\beta}{2}-\frac{2\beta}{2} = -\frac{\beta}{2}$
\\
\\
Finally, we obtain:
\begin{equation*}
    I(\beta,\delta) = \frac{2 \delta - \beta^2 - 1}{2}\,\Phi(\sqrt{2\delta}-\beta) + \frac{\beta^2 + 1}{2}\,\Phi(-\beta) + \frac{\sqrt{2 \delta} +\beta}{2}\,\varphi(\sqrt{2\delta}-\beta) -\frac{\beta}{2}\,\varphi(\beta) 
\end{equation*}

\paragraph{STEP 6: Comparison with $\left( \delta - \frac{\beta^2_+}{2} \right)_+$ \\}

1. \textbf{For $\beta \leq 0$}
\begin{equation*}
    \left( \delta - \frac{\beta^2_+}{2} \right)_+ = \delta
\end{equation*}
And
\begin{equation*}
    I(\beta,\delta) := \int_{0}^{\delta} \Phi \bigl(  \sqrt{2} \sqrt{\delta - t}  - \beta\bigr)\,dt \leq \delta
\end{equation*}
Therefore, for $\beta \leq 0$
\begin{equation*}
    I(\beta,\delta) \leq \left( \delta - \frac{\beta^2_+}{2} \right)_+
\end{equation*}
\\
2. \textbf{For $\sqrt{2 \delta} \leq \beta $}
\begin{equation*}
    \left( \delta - \frac{\beta^2_+}{2} \right)_+ = 0
\end{equation*}
And using Mill's ration inequality $\Phi(-x) \leq \frac{\phi(x)}{x}$
\begin{align*}
    I(\beta,\delta) & = \underbrace{\frac{2 \delta - \beta^2 - 1}{2}\Phi(\sqrt{2\delta}-\beta)}_{\leq 0} + \frac{\beta^2 + 1}{2}\Phi(-\beta) + \underbrace{\frac{\sqrt{2 \delta} +\beta}{2}}_{\leq \beta}\varphi(\sqrt{2\delta}-\beta) \underbrace{-\frac{\beta}{2}\varphi(\beta)}_{\leq 0} \\
    & \leq \frac{\beta^2 + 1}{2}\Phi(-\beta) + \beta\,\varphi(\sqrt{2\delta}-\beta) \\
    & \leq \frac{\beta^2 + 1}{2} \frac{\phi(\beta)}{\beta} + \left(\sqrt{2\delta} + (\beta - \sqrt{2\delta} ) \right) \phi \left(\beta - \sqrt{2\delta} \right) \\
    & \leq \beta \phi(\beta) + \sqrt{2 \delta} + (\beta - \sqrt{2\delta} )\phi \left(\beta - \sqrt{2\delta} \right)
\end{align*}
Using that for all $a >0$, $a\phi(a) \leq 1 \phi(1)$ we have that 
\begin{equation*}
    I(\beta,\delta) = O(\sqrt{\delta})
\end{equation*}
So for $\sqrt{2 \delta} \leq \beta $
\begin{equation*}
    I(\beta,\delta) \leq \left( \delta - \frac{\beta^2_+}{2} \right)_+ + O(\sqrt{\delta})
\end{equation*}
\\
3. \textbf{For $0 \leq \beta \leq \sqrt{2 \delta} $}
\begin{equation*}
    \left( \delta - \frac{\beta^2_+}{2} \right)_+ = \left(\delta - \frac{\beta^2}{2} \right)
\end{equation*}
And
\begin{align*}
    I(\beta,\delta) & = \underbrace{\frac{2 \delta - \beta^2 - 1}{2}\Phi(\sqrt{2\delta}-\beta)}_{\leq \left( \delta - \frac{\beta^2}{2} \right)} + \frac{\beta^2 + 1}{2}\Phi(-\beta) +\underbrace{\sqrt{\frac{\delta}{\pi}}}_{\text{from }\frac{\sqrt{2 \delta} +\beta}{2}\varphi(\sqrt{2\delta}-\beta)} \underbrace{-\frac{\beta}{2}\varphi(\beta)}_{\leq 0} \\
    & \leq \left( \delta - \frac{\beta^2}{2} \right) + \frac{\beta^2 + 1}{2}\Phi(-\beta) +\sqrt{\frac{\delta}{\pi}} 
\end{align*}
Now if $\beta \in [0,C]$ with $C$ a constant then $\frac{\beta^2 + 1}{2}\Phi(-\beta)$ is bounded so 
\begin{equation*}
    I(\beta,\delta) =  \left( \delta - \frac{\beta^2}{2} \right) + \frac{\beta^2 + 1}{2}\Phi(-\beta) +\sqrt{\frac{\delta}{\pi}} = \left( \delta - \frac{\beta^2}{2} \right) + O(\sqrt{\delta})
\end{equation*}
If $\beta \underset{\beta \to \infty}{\longrightarrow} +\infty$ but slower than $\sqrt{2 \delta}$, then by using Mill's ration $\Phi(-\beta) \leq \frac{\phi(\beta)}{\beta}$, we have
\begin{align*}
    \frac{\beta^2 + 1}{2}\Phi(-\beta) &\leq\frac{\beta^2 + 1}{2} \cdot \frac{\phi(\beta)}{\beta} \\
    & = \frac{\beta^2 + 1}{2 \sqrt{2 \pi}} \cdot \frac{e^{-\beta^2 / 2}}{\beta} \\
    & = O(\sqrt{\beta}) \\
\end{align*}
So for $\beta$ big enough we have a constant $C'$ such that
\begin{equation*}
     \frac{\beta^2 + 1}{2}\Phi(-\beta) \leq C' \sqrt{\beta} \leq C'\sqrt{\sqrt{2 \delta}} = o(\sqrt{\delta})
\end{equation*}
Therefore, for $0 \leq \beta \leq \sqrt{2 \delta} $,
\begin{equation*}
    I(\beta,\delta) \leq  \left( \delta - \frac{\beta^2_+}{2} \right)_+ + O(\sqrt{\delta})
\end{equation*}
In all cases,
\begin{equation*}
    I(\beta,\delta) \leq \left( \delta - \frac{\beta^2_+}{2} \right)_+ + O(\sqrt{\delta})
\end{equation*}

\paragraph{Conclusion}

We have:
\begin{equation*}
    \mathbb{E}\left[ \left( \delta_t - \frac{N'_a}{2 \sigma'^2} \left(\mu_1 - (\hat \mu'_a + L_a) \right)_+^2 \right)_+ \right] \leq \left( \delta_t - \frac{N_a'}{2 \sigma'^2}\left( \mu_1 - (\mu_a' + L_a) \right)_+^2 \right)_+ + O(\sqrt{\delta_t})
\end{equation*}
\end{proof}

\subsection*{Detail proof of Lemma \ref{lem:lemma3}}

\begin{proof}
If $N'_1 = 0$ it's immediate. We will consider the case where $N'_1 \geq 1$. 
    We have $\eta := L_1 + \mu'_1 - \mu_1 > 0$, and then:
\begin{align*}
    f^+_1(\mu_1)
    &= N'_1 \frac{\bigl(\mu_1 - (\hat\mu'_1 + L_1)\bigr)_+^2}{2\,\sigma'^2}
     \\
    & = N'_1 \frac{\bigl((\mu'_1 - \hat\mu'_1) - \eta \bigr)_+^2}{2\,\sigma'^2}.
\end{align*}
Let $Z := \mu'_1 - \hat\mu'_1$. Then $Z \sim \mathcal{N}\bigl(0,\tfrac{\sigma'^2}{N'_1}\bigr)$. Let us define:
\begin{equation*}
    \tau := \frac{\sigma'}{\sqrt{N'_1}}, 
  \quad 
  W := \frac{Z}{\tau} \sim \mathcal{N}(0,1).
\end{equation*}

Therefore,
\begin{equation*}
    Z - \eta = \tau\,W - \eta,
  \quad
  (Z - \eta)_+ = \tau\,\bigl(W - \tfrac{\eta}{\tau}\bigr)_+.
\end{equation*}

Thus,
\begin{equation*}
    f^+_1(\mu_1)
  = 
  \begin{cases}
    0, & \text{if } Z \le \eta \quad (\text{i.e., } W \le \tfrac{\eta}{\tau}), \\[1ex]
    N'_1\,\frac{(Z - \eta)^2}{2\,\sigma'^2}, & \text{if } Z > \eta \quad (\text{i.e., } W > \tfrac{\eta}{\tau}).
  \end{cases}
\end{equation*}

For $Z>\eta$,
\begin{equation*}
    N'_1\,\frac{(Z - \eta)^2}{2\,\sigma'^2}
  = \tfrac12\,\bigl(W - \tfrac{\eta}{\tau}\bigr)^2.
\end{equation*}
Thus:
\begin{equation*}
    f^+_1(\mu_1)
  =
  \begin{cases}
    0, & \text{if } W \le a, \\[1ex]
    \displaystyle \tfrac12\,\bigl(W - a\bigr)^2, & \text{if } W > a,
  \end{cases}
  \quad
  \text{where} \ a := \frac{\eta}{\tau} = \frac{\eta\,\sqrt{N'_1}}{\sigma'}.
\end{equation*}
We now aim to upper-bound:
\begin{equation*}
    \mathbb{E}\bigl[e^{\,f^+_1(\mu_1)}\bigr]
  = \int_{-\infty}^{+\infty} \exp\bigl(f^+_1(\mu_1)\bigr)\,\varphi(w)\,dw,
  \quad
  \varphi(w) = \frac{1}{\sqrt{2\pi}}\,e^{-\,w^2/2}.
\end{equation*}
Since:
\begin{equation*}
    \exp\bigl(f^+_1(\mu_1)\bigr)
  =
  \begin{cases}
    1, & \text{if } W \le a, \\[1ex]
    \exp\Bigl(\tfrac12\,(W - a)^2\Bigr), & \text{if } W > a,
  \end{cases}
\end{equation*}
we split the integral:
\[
  \begin{aligned}
    \mathbb{E}\bigl[e^{\,f^+_1(\mu_1)}\bigr]
    &= \int_{-\infty}^a 1 \cdot \varphi(w)\,dw 
      + \int_a^{+\infty} \exp\Bigl(\tfrac12\,(w - a)^2\Bigr)\,\varphi(w)\,dw \\[1ex]
    &= \underbrace{\int_{-\infty}^a \varphi(w)\,dw}_{\Phi(a)} 
      + \underbrace{\int_a^{+\infty} \exp\Bigl(\tfrac12\,(w - a)^2\Bigr)\,\varphi(w)\,dw}_{J(a)}.
  \end{aligned}
\]
For $w>a$, we get:
\begin{equation*}
    \varphi(w)\,\exp\Bigl(\tfrac12\,(w - a)^2\Bigr)
  = \frac{1}{\sqrt{2\pi}}\,\exp\Bigl(-\,a\,w + \tfrac{a^2}{2}\Bigr).
\end{equation*}
Thus:
\begin{align*}
    J(a)
  &= \int_a^{+\infty} \frac{1}{\sqrt{2\pi}}\,\exp\Bigl(-\,a\,w + \tfrac{a^2}{2}\Bigr)\,dw \\
  & = e^{\,a^2/2}\,\frac{1}{\sqrt{2\pi}} \int_a^{+\infty} e^{-\,a\,w}\,dw \\
  & = e^{\,a^2/2}\,\frac{1}{\sqrt{2\pi}} \cdot \frac{1}{a}\,e^{-\,a^2} \\
  & = \frac{1}{\sqrt{2\pi}}\,\frac{1}{a}\,e^{-\,\tfrac{a^2}{2}}.
\end{align*}

Finally, with $a := \frac{\eta\,\sqrt{N'_a}}{\sigma'}$,
\begin{equation*}
    \mathbb{E}\bigl[e^{\,f^+_1(\mu_1)}\bigr]
    = \Phi(a) + \frac{1}{\sqrt{2\pi}}\,\frac{1}{a}\,e^{-\,\tfrac{a^2}{2}}
    = \Phi(a) + \frac{\varphi(a)}{a}
\end{equation*}
For all $N'_a \ge 1$,
\begin{align*}
    a = \frac{\eta\,\sqrt{N'_1}}{\sigma'} \ge \frac{\eta}{\sigma'},
\end{align*}
Then we define
\begin{equation*}
    C' := 1+\frac{\sigma'}{\eta\sqrt{2 \pi}} \geq \Phi(a) + \frac{\varphi(a)}{a}.
\end{equation*}

\end{proof}

\subsection*{Detail proof of Lemma \ref{lem:lemma4}}

\begin{proof}
We aim to bound 
\begin{equation*}
  \mathbb{P}\bigl(\mu_1 > U_1(t)\bigr).
\end{equation*}
By definition of $U_1(t)$, conditioning on the random term $f^+_1(\mu_1)$, we write
\begin{equation*}
  \mathbb{P}\bigl(\mu_1 > U_1(t)\bigr)
  = \mathbb{P}\bigl(N_1(t)\,d^+(\hat\mu_1(t),\mu_1) + f^+_1(\mu_1) > \delta_t\bigr)
  = \mathbb{E}\Bigl[
      \mathbb{P}\bigl(N_1(t)\,d^+(\hat\mu_1(t),\mu_1) > \delta_t - f^+_1(\mu_1)\,\bigm|\,f^+_1\bigl)\Bigr].
\end{equation*}
Fix an outcome of $f^+_1$, and assume $\delta_t > f^+_1(\mu_1)$ (otherwise the probability is trivially $\le 1$ and the same bound follows). Under $\delta_t > f^+_1(\mu_1)$, the event
\begin{equation*}
  N_1(t)\,d^+(\hat\mu_1(t),\mu_1) > \delta_t - f^+_1(\mu_1)
\end{equation*}
is equivalent (for Gaussian rewards, variance $\sigma^2$) to
\begin{equation*}
  \hat\mu_1(t) \le \mu_1 - \sqrt{\frac{2\sigma^2\bigl(\delta_t - f^+_1(\mu_1)\bigr)}{N_1(t)}}.
\end{equation*}
We control
\begin{equation*}
  \mathbb{P}\Bigl(\hat\mu_1(t) \le \mu_1 - \sqrt{\tfrac{2\sigma^2(\delta_t - f^+_1(\mu_1))}{N_1(t)}} \mid f^+_1\Bigr)
\end{equation*}
by a standard peeling (time-index partition) argument.

\paragraph{Peeling over $N_1(t)$.}
Let $\gamma>1$. Define 
\begin{equation*}
  M = \left\lceil \frac{\ln t}{\ln \gamma}\right\rceil,\quad
  n_m = \lceil \gamma^m \rceil \quad(m=1,2,\dots,M),\quad n_0=0.
\end{equation*}
Then $N_1(t)$ lies in some interval $\{\,n_{m-1}+1,\dots,n_m\}$. For each block $m=1,\dots,M$, set
\begin{equation*}
  \varepsilon_m \;:=\; \sqrt{\frac{2\sigma^2\bigl(\delta_t - f^+_1(\mu_1)\bigr)}{\,n_m\,}}.
\end{equation*}
For all $i\in[n_{m-1}+1,n_m]$:
\begin{equation*}
  \sqrt{\frac{2\sigma^2(\delta_t - f^+_1(\mu_1))}{i}}
  \;\ge\; \sqrt{\frac{2\sigma^2(\delta_t - f^+_1(\mu_1))}{n_m}}
  = \varepsilon_m.
\end{equation*}
Hence for all $i\in[n_{m-1}+1,n_m]$
\begin{equation*}
  \bigl\{\hat\mu_{1}(i) \le \mu_1 - \sqrt{\tfrac{2\sigma^2(\delta_t - f^+_1(\mu_1))}{\,i\,}}\bigr\}
  \;\subseteq\;
  \bigl\{\hat\mu_{1}(i) \le \mu_1 - \varepsilon_m\bigr\}.
\end{equation*}
Thus by union bound,
\begin{equation*}
  \mathbb{P}\Bigl(\hat\mu_1(t) \le \mu_1 - \sqrt{\tfrac{2\sigma^2(\delta_t - f^+_1(\mu_1))}{N_1(t)}} \mid f^+_1\Bigr)
  \;\le\;
  \sum_{m=1}^M 
  \mathbb{P}\Bigl(\exists\,i\in[n_{m-1}+1,n_m]: \hat\mu_{1}(i) \le \mu_1 - \varepsilon_m \;\Big|\; f^+_1\Bigr).
\end{equation*}
Writing $\hat\mu_{1}(i) = \tfrac1i\sum_{j=1}^i X_j$, with $X_j\sim\mathcal{N}(\mu_1,\sigma^2)$ i.i.d., the event $\hat\mu_{1}(i)\le \mu_1 - \varepsilon_m$ is equivalent to
\begin{equation*}
  \sum_{j=1}^i X_j \le i(\mu_1 - \varepsilon_m).
\end{equation*}
We now control
\begin{equation*}
  \mathbb{P}\Bigl(\exists\,i\in[n_{m-1}+1,n_m]: S_i \le i(\mu_1 - \varepsilon_m)\Bigr),
  \quad S_i := \sum_{j=1}^i X_j.
\end{equation*}

\paragraph{Exponential martingale \& Doob’s inequality.}
Define the log-MGF of one reward:
\begin{equation*}
  \phi(\lambda) := \ln \mathbb{E}[e^{\lambda X_1}] = \lambda \mu_1 + \tfrac{\lambda^2\sigma^2}{2}, 
  \quad \lambda\in\mathbb{R}.
\end{equation*}
For any fixed $\lambda<0$, consider
\begin{equation*}
  W_{\lambda,i} := \exp\bigl(\lambda S_i - i\,\phi(\lambda)\bigr).
\end{equation*}
Since $\mathbb{E}[e^{\lambda X_j}] = e^{\phi(\lambda)}$, $(W_{\lambda,i})_{i\ge0}$ is a positive martingale. By Doob’s maximal inequality, for all $\lambda < 0$
\begin{equation}\label{eq:peel1}
  \mathbb{P}\Bigl(\max_{i\in[n_{m-1}+1,n_m]} W_{\lambda,i} \ge u\Bigr)
  \;\le\; \frac{\mathbb{E}[W_{\lambda,n_m}]}{u}
  = \frac{1}{u},
  \quad u>0.
\end{equation}
On the other hand, for each $i\ge n_{m-1}+1$,
\begin{equation*}
  \{\;S_i \le i(\mu_1 - \varepsilon_m)\}
  \;\Longrightarrow\;
  W_{\lambda,i} \;=\; \exp\bigl(\lambda S_i - i\phi(\lambda)\bigr)
  \;\ge\; \exp\bigl(\lambda\,i(\mu_1 - \varepsilon_m) - i\phi(\lambda)\bigr).
\end{equation*}
Thus
\begin{equation}\label{eq:peel2}
  \mathbb{P}\Bigl(\exists i\in[n_{m-1}+1,n_m]: S_i \le i(\mu_1 - \varepsilon_m)\Bigr)
  \;\le\;
  \mathbb{P}\Bigl(\max_{i\in[n_{m-1}+1,n_m]} W_{\lambda,i}
  \;\ge\; \exp\bigl(\lambda\,i(\mu_1 - \varepsilon_m) - i\phi(\lambda)\bigr)\Bigr).
\end{equation}
Since $\lambda<0$ and $\phi(\lambda) = \lambda\mu_1 + \tfrac{\lambda^2\sigma^2}{2}$,
\begin{equation*}
  \lambda\,i(\mu_1 - \varepsilon_m) - i \phi(\lambda)
  = -\,i\Bigl(\lambda \varepsilon_m + \tfrac{\lambda^2\sigma^2}{2}\Bigr).
\end{equation*}
For all $i\ge n_{m-1}+1$, this is bounded by
\begin{equation*}
  - (n_{m-1}+1)\Bigl(\lambda \varepsilon_m + \tfrac{\lambda^2\sigma^2}{2}\Bigr).
\end{equation*}
Hence, by Equations \eqref{eq:peel1} and \eqref{eq:peel2}
\begin{equation*}
  \mathbb{P}\Bigl(\exists i\in[n_{m-1}+1,n_m]: S_i \le i(\mu_1 - \varepsilon_m)\Bigr)
  \;\le\; \exp\Bigl((n_{m-1}+1)\bigl(\lambda \varepsilon_m + \tfrac{\lambda^2\sigma^2}{2}\bigr)\Bigr).
\end{equation*}

\paragraph{Choice of $\lambda$.}
To optimize the exponent, set $\lambda = -\varepsilon_m / \sigma^2 < 0$. Then
\begin{equation*}
  \lambda \varepsilon_m + \tfrac{\lambda^2\sigma^2}{2}
  = -\frac{\varepsilon_m^2}{2\sigma^2}
  = -\frac{1}{2\sigma^2}\cdot \frac{2\sigma^2(\delta_t - f^+_1(\mu_1))}{n_m}
  = -\frac{\delta_t - f^+_1(\mu_1)}{n_m}.
\end{equation*}
Thus
\begin{equation*}
  (n_{m-1}+1)\Bigl(\lambda \varepsilon_m + \tfrac{\lambda^2\sigma^2}{2}\Bigr)
  = - (n_{m-1}+1)\,\frac{\delta_t - f^+_1(\mu_1)}{n_m}
  \;\le\; -\frac{1}{\gamma}\bigl(\delta_t - f^+_1(\mu_1)\bigr),
\end{equation*}
since $(n_{m-1}+1)/n_m \ge 1/\gamma$. Therefore
\begin{equation*}
  \mathbb{P}\Bigl(\exists i\in[n_{m-1}+1,n_m]: S_i \le i(\mu_1 - \varepsilon_m)\Bigr)
  \;\le\; \exp\Bigl(-\tfrac{1}{\gamma}(\delta_t - f^+_1(\mu_1))\Bigr).
\end{equation*}
Summing over $m=1,\dots,M$:
\begin{equation*}
  \mathbb{P}\Bigl(\hat\mu_1(t) \le \mu_1 - \sqrt{\tfrac{2\sigma^2(\delta_t - f^+_1(\mu_1))}{N_1(t)}} \mid f^+_1\Bigr)
  \;\le\; \sum_{m=1}^M \exp\Bigl(-\tfrac{1}{\gamma}(\delta_t - f^+_1(\mu_1))\Bigr)
  = M \,\exp\Bigl(-\tfrac{1}{\gamma}(\delta_t - f^+_1(\mu_1))\Bigr).
\end{equation*}

\paragraph{Choice of $\gamma$ and final bound.}
Take $\delta_t = \ln t + 3\ln\ln t$. To simplify the prefactor $M = \lceil \ln t / \ln\gamma\rceil$, choose 
\[
\gamma := \frac{\delta_t}{\delta_t - 1},
\]
so that $\delta_t / \gamma = \delta_t - 1$ and $\ln\gamma = \ln \left( 1 + \frac{1}{\delta_t - 1} \right) \ge \frac{\frac{1}{\delta_t -1}}{\frac{1}{\delta_t -1} + 1} =1/\delta_t$. Then
\begin{equation*}
  M = \Bigl\lceil \frac{\ln t}{\ln\gamma}\Bigr\rceil \;\le\; \lceil \delta_t \ln t\rceil,
  \quad
  \exp\Bigl(-\tfrac{1}{\gamma}(\delta_t - f^+_1(\mu_1))\Bigr)
  = \exp\bigl(-( \delta_t - 1)\bigr)\,\exp\bigl(f^+_1(\mu_1)/\gamma\bigr).
\end{equation*}
Hence
\begin{equation*}
  M \exp\Bigl(-\tfrac{1}{\gamma}(\delta_t - f^+_1(\mu_1))\Bigr)
  \;\le\; \lceil \delta_t \ln t\rceil \,\exp\bigl(-( \delta_t - 1)\bigr)\,\exp\bigl(f^+_1(\mu_1)\bigr).
\end{equation*}
Since $\delta_t = \ln t + 3\ln\ln t$, $\lceil \delta_t \ln t\rceil \exp(-(\delta_t - 1)) = O(1/(t\ln t))$. Thus
\begin{equation*}
  \mathbb{P}\Bigl(N_1(t)\,d^+(\hat\mu_1(t),\mu_1) > \delta_t - f^+_1(\mu_1)\mid f^+_1\Bigr)
  \;\le\; O\bigl(\tfrac{1}{t\ln t}\bigr)\,\exp\bigl(f^+_1(\mu_1)\bigr).
\end{equation*}
Taking expectation over $f^+_1$ 
\begin{equation*}
  \mathbb{P}(\mu_1>U_1(t)) = O\left(\frac{\mathbb{E}[e^{f^+_1(\mu_1)}]}{t\ln t}\right).
\end{equation*}
This completes the proof.
\end{proof}

\subsection*{Explicit formula for $U_a(t)$ in the Gaussian case}

We want an explicit formula for
\begin{equation*}
    U_a(t) := \max \left\{ q \ \middle| \ N_a(t)\frac{( q - \hat \mu_a(t))_+^2}{2 \sigma^2} + N'_a \frac{\left( q - L_a - \hat \mu'_a \right)^2_+}{2 \sigma'^2} \leq \delta_t \right\}
\end{equation*}

Let us denote $\alpha := \frac{N_a(t)}{2 \sigma^2}$, $\beta := \frac{N'_a}{2 \sigma'^2}$, $\mu := \hat \mu_a(t)$, $\mu' := \mu'_a$, $L := L_a$ and $\delta := \delta_t$.
Then,
\begin{equation*}
    U_a(t) = \max \left\{ q \ \middle| \ \alpha ( q - \mu)_+^2 + \beta( q-( \mu'+L))_+^2 \leq \delta \right\}
\end{equation*}
\begin{itemize}
    \item If $\mu \geq  \mu' + L +  \sqrt{\frac{\delta}{\beta}}$ then,
    \begin{equation*}
        U_a(t) =  \mu' + L + \sqrt{\frac{\delta}{\beta}}
    \end{equation*}
    \item If $\mu' + L \geq \mu + \sqrt{\frac{\delta}{\alpha}}$ then,
    \begin{equation*}
        U_a(t) = \mu + \sqrt{\frac{\delta}{\alpha}}
    \end{equation*}
    \item Otherwise,
    \begin{equation*}
        U_a(t) = \frac{\alpha \mu + \beta ( \mu' + L) + \sqrt{ (\alpha + \beta) \delta -  \alpha\beta(\mu - \big( \mu' + L)\big)^2 } }{\alpha + \beta}
    \end{equation*}
\end{itemize}

One simply checks the three cases below:
\begin{itemize}
    \item $\mu' + L \leq q \leq \mu$
    \item $\mu \leq q \leq \mu' + L$
    \item $\max(\mu' + L, \mu) \leq q$
\end{itemize}

\end{document}